\pdfoutput=1
\documentclass[11pt, letterpaper, logo, onecolumn, copyright]{template_from_google}
\usepackage{float}
\usepackage{wrapfig}
\usepackage[T1]{fontenc}
\usepackage{CJKutf8} 
\usepackage{amsmath}
\usepackage{amsfonts}
\usepackage{amssymb}
\usepackage{amsthm}
\usepackage{bm} 
\usepackage{nicefrac} 
\newcommand{\xmark}{\ding{55}} 
\newcommand{\cmark}{\ding{51}} 

\usepackage{booktabs} 
\usepackage{enumitem} 
\usepackage{fancyhdr}
\usepackage{multirow} 
\usepackage{tabularx}
\usepackage{multicol}
\usepackage{adjustbox}
\usepackage{array}
\usepackage{longtable}
\usepackage{lipsum} 
\usepackage{wrapfig} 
\usepackage{tocloft} 
\usepackage{fancybox}
\usepackage{authblk} 

\usepackage{algorithm}
\usepackage{algorithmic}    

\usepackage{graphicx}
\usepackage{caption}
\usepackage{subcaption}
\usepackage{capt-of} 
\usepackage[inkscapeformat=png]{svg}

\usepackage{listings, listings-rust}
\usepackage{listingsutf8}

\usepackage[dvipsnames]{xcolor}
\usepackage[most, breakable, skins]{tcolorbox}
\tcbuselibrary{listings,skins,breakable}

\usepackage{fontawesome5}
\usepackage{xspace} 

\usepackage[authoryear, sort&compress, round]{natbib}

\usepackage[dvipsnames]{xcolor} 

\theoremstyle{plain}
\newtheorem{theorem}{Theorem}[section]

\theoremstyle{definition}

\theoremstyle{remark}
\newtheorem{remark}[theorem]{Remark}

\definecolor{darkblue}{rgb}{0.0, 0.0, 0.6}
\definecolor{darkred}{rgb}{0.7, 0.0, 0.0}
\hypersetup{
  pdffitwindow=true,
  pdfstartview={FitH},
  pdfnewwindow=true,
  colorlinks,
  linktocpage=true,
  linkcolor=darkred,
  urlcolor=darkblue,
  citecolor=darkblue
}
\usepackage{hyperref}

\makeatletter

\renewcommand\paragraph{\@startsection{paragraph}{4}{\z@}%
            {-2.5ex\@plus -1ex \@minus -.25ex}%
            {1.25ex \@plus .25ex}%
            {\itshape\normalsize\bfseries}}
\makeatother

\let\cite\citep

\title{Ultra Fast PDE Solving via Physics Guided Few-step Diffusion}
\newcommand{\shorttitle}{Phys-Instruct}
\reportnumber{} 

\renewcommand{\today}{}

\author[1]{Cindy Xiangrui Kong}
\author[1]{Yueqi Wang}
\author[1]{Haoyang Zheng}
\author[2]{Weijian Luo}
\author[1]{Guang Lin}

\affil[1]{Purdue University}

\affil[2]{hi-Lab, Xiaohongshu Inc}

\begin{abstract}
Diffusion-based models have demonstrated impressive accuracy and generalization in solving partial differential equations (PDEs). However, they still face significant limitations, such as high sampling costs and insufficient physical consistency, stemming from their many-step iterative sampling mechanism and lack of explicit physics constraints. To address these issues, we propose \emph{Phys-Instruct}, a novel physics-guided distillation framework which not only (1) compresses a pre-trained diffusion PDE solver into a few-step generator via matching generator and prior diffusion distributions to enable rapid sampling, but also (2) enhances the physics consistency by explicitly injecting PDE knowledge through a PDE distillation guidance. Physic-Instruct is built upon a solid theoretical foundation, leading to a practical physics-constrained training objective that admits tractable gradients.
Across five PDE benchmarks, Phys-Instruct achieves \textit{orders-of-magnitude faster inference} while reducing PDE error by \textit{more than 8$\times$} compared to state-of-the-art diffusion baselines.
Moreover, the resulting unconditional student model functions as a compact prior, enabling efficient and physically consistent inference for various downstream conditional tasks. 
Our results indicate that Phys-Instruct is a novel, effective, and efficient framework for ultra-fast PDE solving powered by deep generative models.
\end{abstract}

\begin{document}

\maketitle

\section{Introduction}
\begin{wrapfigure}{16}{0.5\textwidth}
    \centering
    \vspace{-5pt} 
    \begin{subfigure}{\linewidth}
        \centering
        \includegraphics[width=\linewidth]{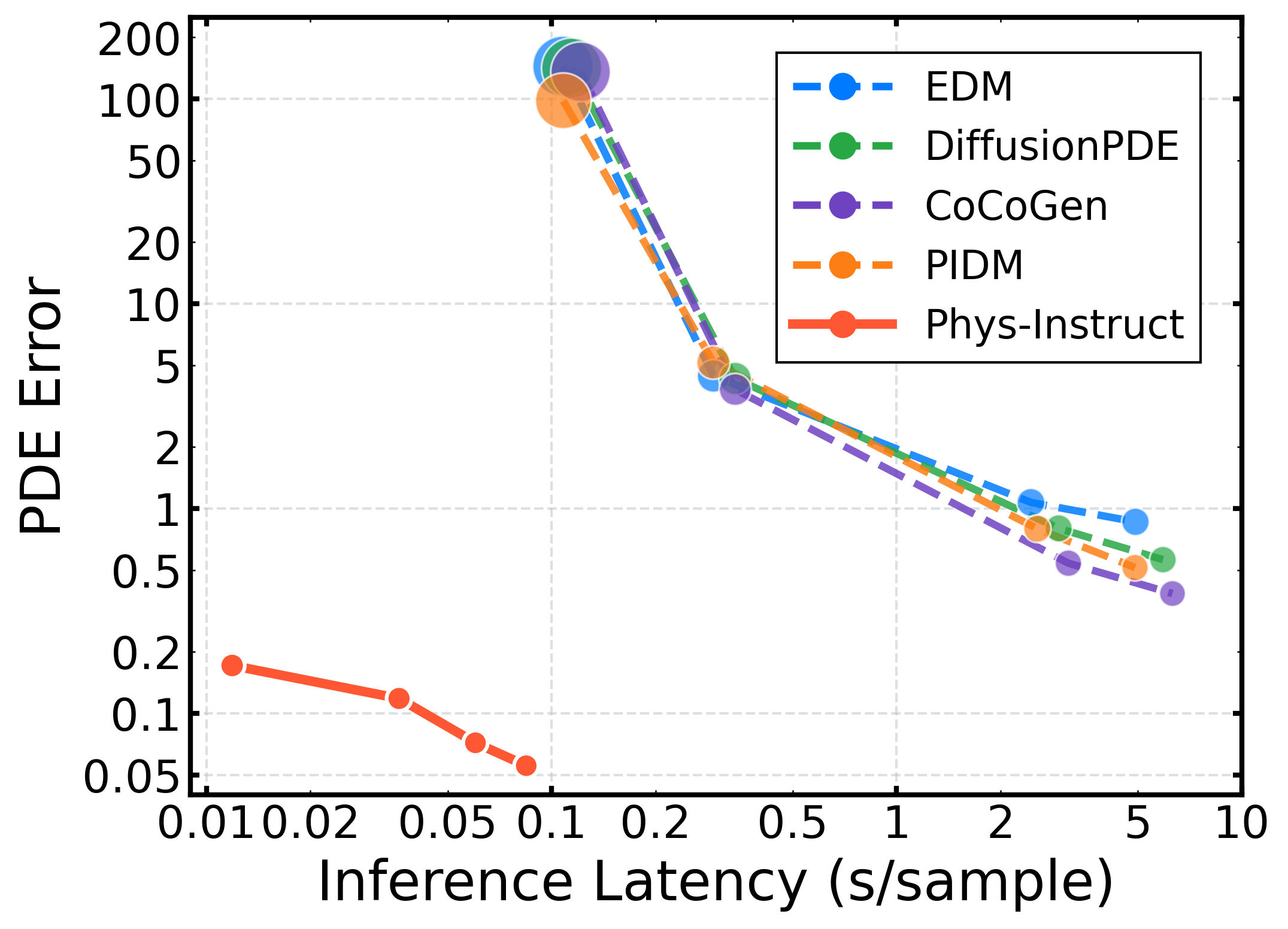}
    \end{subfigure}
    
    \vspace{2pt}
    
    \begin{subfigure}{\linewidth}
        \centering
        \includegraphics[width=\linewidth]{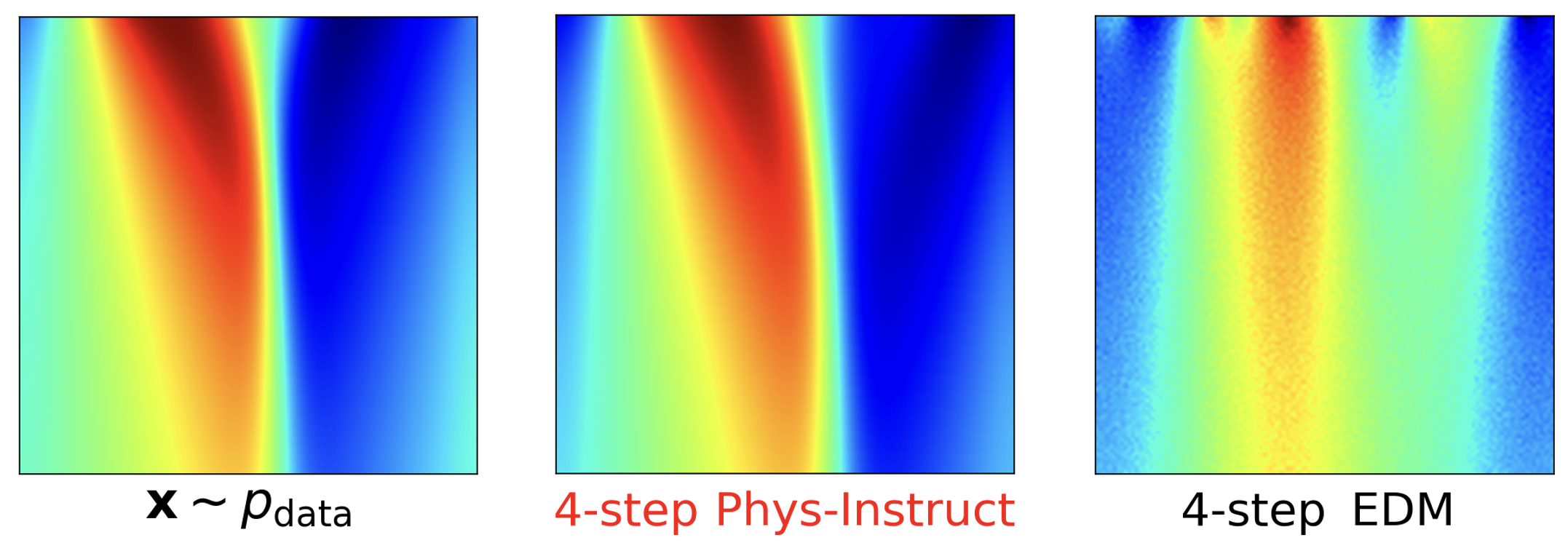}
    \end{subfigure}
    
\caption{Burgers' unconditional generation. \textbf{Top:} PDE Error (lower is better, indicated by bubble size) vs.\ per-sample sampling latency. Baselines use 4/10/100/200 steps; Phys-Instruct uses 1--4 steps. \textbf{Bottom:} Unconditional Burgers' samples (left to right): data from $p_{\text{data}}$, four-step Phys-Instruct, and four-step EDM teacher.}
    \label{fig:burgers_latency_and_samples}
    \vspace{-2pt} 
\end{wrapfigure}

Diffusion models have recently shown strong empirical performance for modeling spatially structured scientific fields, and have been increasingly explored as flexible priors for \textit{\textbf{Partial Differential Equation (PDE)}}-related data \cite{daras2024survey,zheng2022data,zheng2025inversebench}.
In PDE solving problems, such priors are appealing not only for unconditional field generation, but also for conditional tasks such as forward prediction \cite{raissi2019physics}, inverse problems \cite{cai2021physics,zheng2024hompinns}, and reconstruction from partial observations \cite{huang2024diffusionpde}.
However, making diffusion models practical for PDE tasks requires addressing two coupled issues: sampling efficiency and physical consistency.

\textbf{Sampling Efficiency.} High-fidelity samples typically require hundreds to thousands of denoising steps, resulting in large numbers of function evaluations (NFE) and high sampling cost \cite{song2021score,ho2020denoising}. For example, DiffusionPDE \cite{huang2024diffusionpde} addresses forward, inverse, and reconstruction PDE problems from sparse observations by running a long reverse diffusion trajectory on the order of thousands of steps; \citet{yang2023denoising} predicts nonlinear fluid fields with a denoising diffusion model that samples through hundreds of iterative Langevin updates. These approaches demonstrate the potential of diffusion-based PDE modeling, but the required number of function evaluations makes inference expensive and limits their practicality in real-world settings.

\textbf{Physical Consistency.} Standard diffusion training is primarily data-driven and does not explicitly enforce PDE governing equations; as a result, samples can be visually plausible yet exhibit non-negligible PDE error \cite{bastek2025physics}.
Moreover, under aggressive acceleration or data-free distillation, small distribution-matching errors can translate into noticeable physics violations, motivating explicit PDE-aware objectives \cite{zhang2025physics}.

Existing physics-guided remedies tend to trade one challenge for the other:
Some impose physics during inference via guidance or iterative correction, which can reduce PDE error but comes at the cost of higher sampling latency~\citep{huang2024diffusionpde}.
Others incorporate physics guidance during diffusion training by penalizing PDE error along multi-step denoising trajectories, but sampling typically remains iterative, and the additional constraints can introduce nontrivial trade-offs between sample quality and physical fidelity \cite{bastek2025physics,baldan2025flow}.
In contrast, distillation and other aggressive acceleration schemes reduce sampling steps \cite{luo2023diff,song2023consistency,liu2023flow,salimans2022progressive}, but physical consistency may degrade without explicit PDE equation constraint.
This is particularly pronounced when the budget is limited, where fewer sampling steps imply more aggressive updates along the sampling trajectory, increasing approximation errors when physical constraints are enforced during inference. Together, these highlight a speed-physics trade-off in diffusion-based PDE generation.

In this paper, we address the mentioned issues by introducing \textbf{Phys-Instruct}, a physics-guided distillation framework that distills a multi-step diffusion teacher PDE solver into a few-step student solver, i.e., a few-step generator. Moreover, Phys-Instruct explicitly enforces physics consistency by adding physics equation constraint terms, e.g., differentiable PDE error, incorporating it into distillation as a novel constraint in \eqref{eq:constrained_ikl}. Importantly, this explicit constraint is applied only during training to shape the student distribution; at inference time, the student samples are obtained with a fixed few-step procedure and require no PDE-based guidance, correction, or post-hoc refinement. This few-step PDE solver obtained by Phys-Instruct can be used with flexibility.

We conduct evaluations of Phys-Instruct on a broad range of PDE benchmarks with varied formulations.
Experiments show that adding physics guidance during distillation improves unconditional generation quality, yielding a favorable accuracy--latency trade-off relative to multi-step diffusion sampling.
Crucially, the resulting distilled generator can be repurposed as a compact diffusion prior for conditional solvers. It facilitates efficient, non-iterative sampling across diverse PDE observation scenarios. 

Our \textbf{main contributions} are three-fold:
(i) we propose {Phys-Instruct}, a training-time physics-guided distillation framework that compresses a multi-step diffusion teacher into a one-/few-step generator for PDEs with no test-time physics guidance or control steps;
(ii) we demonstrate that incorporating PDE error as a constraint improves distillation over the no physics guidance variant, while requiring no paired ground-truth training data supervision during distillation;
(iii) We show the distilled unconditional generator serves as a reusable diffusion prior for downstream conditional PDE tasks (forward/inverse problem with full/partial observations), with favorable accuracy--latency trade-offs.

\section{Related Work}

\subsection{Diffusion priors for PDE fields.}

Recent work applies diffusion and score-based generative models to PDE fields in both unconditional and conditional settings.
Unconditional models aim to learn the distribution of physically meaningful fields \citep{yuan2025}, while conditional variants use diffusion priors to solve forward prediction, inverse problems, or reconstruction from partial observations by sampling fields consistent with given measurements \citep{shysheya2024conditional,baldassari2023conditional}.
A central question in this field of literature is how to incorporate physics.
Many approaches remain purely data-driven \citep{dasgupta2025conditional,wang2023generative}, whereas physics-aware variants inject PDE structure through explicit regularization during training \citep{bastek2025physics,shu2023physics} or through physics-guided sampling/correction along the generation trajectory \citep{huang2024diffusionpde,yao2025fundps,jacobsen2024cocogen}.
Our method is complementary: we incorporate physics only at \emph{distillation time} to obtain efficient one-/few-step generators, without relying on test-time physics guidance or iterative correction.

\subsection{Few-step diffusion and distillation.}
Diffusion sampling is typically implemented by iteratively solving the reverse-time dynamics (e.g., via SDE/ODE solvers), and high-fidelity generation often requires many function evaluations \citep{song2021score,Karras2022Elucidating}.
A broad line of work reduces this cost through improved samplers and solver design \citep{lu2022dpmsolverplusplus,dockhorn2022genie,zhou2024fast,nichol2021improved,shih2023parallel}, as well as by distilling multi-step teachers into one-/few-step generators \citep{salimans2022progressive,liu2023flow}.
Consistency models enforce \emph{trajectory consistency} across noise levels \citep{song2023consistency,song2023improved,lu2024simplifying,Geng2024ConsistencyModelsMadeEasy,Luo2023LatentConsistencyModels,Kim2024CTM,Geng2025MeanFlows}, while Diff-Instruct proposes a distribution-matching objective for one-step generation and has inspired many extensions \citep{luo2023diff,wang2025uni,luo2024diffstar,Luo2024SIM,xu2025one,Yin2024DMD,Yin2024DMD2,Xie2024EMD,fan2023dpok,Zhou2024ScoreIdentity,huang2024flow,yoso,luo2025reward,zheng2025ultra}.
Distillation is well-established in image or text generation, but adapting it to scientific generation is under explored, particularly in how to integrate physics constraints into the distillation process.
\citet{zhang2025physics} apply distillation technique by explicitly enforcing PDE constraints on terminal samples, while Phys-Instruct enforces physics during distillation and therefore removes the need for inference-time physics guidance in both unconditional and conditional settings.

\vspace{-3mm}
\section{Background}
\subsection{Problem Setup of PDE Solvers}
We consider PDE problems on a domain $\mathcal{D}$, where $\mathcal{D}=\Omega_x\subset\mathbb{R}^d$ for static systems and $\mathcal{D}=\Omega_x\times\Omega_t$ for dynamic systems. Let $\xi\in\mathcal{D}$ denote the coordinates. We collect all physical fields into $\mathbf{x}(\xi)\in\mathbb{R}^c$ and partition it as
$\mathbf{x}(\xi)=\big[\mathbf{a}(\xi)^\top,\mathbf{u}(\xi)^\top\big]^\top$,
where $\mathbf{a}(\xi)\in\mathbb{R}^{c_1}$, $\mathbf{u}(\xi)\in\mathbb{R}^{c_2}$, and $c=c_1+c_2$.
The pair $(\mathbf{a},\mathbf{u})$ specifies a PDE instance and its associated solution: in static problems, $\mathbf{a}$ represents coefficients and $\mathbf{u}$ is the corresponding solution field; in dynamic problems, $\mathbf{a}$ specifies the initial condition at $t=0$ and $\mathbf{u}:=\mathbf{u}(\cdot,T)$ denotes the solution state at a target time $T$.
The governing laws, together with boundary/initial constraints, are written as
\begin{equation}
  \mathcal{G}[\mathbf{x}](\xi)=0,\qquad \xi\in\Omega,
  \label{eq:continuous_pde}
\end{equation}
where $\mathcal{G}$ is a differential operator. 
Our goal is to generate $\mathbf{x}=(\mathbf{a},\mathbf{u})$ from the generator $g_\theta$ that satisfy \eqref{eq:continuous_pde}.

\subsection{Diffusion Models}
Assume we observe data from an unknown distribution $\mathbf{x} \sim p_{\text{data}}(\mathbf{x})$.
Diffusion models construct a forward noising process that transforms the initial distribution $p_0 =p_{\text{data}} $ into a simple noise distribution.
Specifically, the forward process $\{\mathbf{x}_t\}_{t\in[0,T]}$ is described by the It\^o SDE:
\begin{equation}
\label{eq:fwd-process}
  d\mathbf{x}_t = \mathbf{F}(\mathbf{x}_t,t)\,dt + G(t)\,d\mathbf{w}_t,
\end{equation}
where $\mathbf{F}$ and $G(t)$ are pre-defined drift and diffusion coefficients, and $\mathbf{w}_t$ is a standard Wiener process.
We write $p_t(\mathbf{x}_t)$ for the marginal density of $\mathbf{x}_t$ and $p_{0t}(\mathbf{x}_t\mid \mathbf{x}_0)$ for the forward transition density induced by~\eqref{eq:fwd-process}.

To reverse the noising dynamics, score-based diffusion models estimate the time-dependent score $\nabla_{\mathbf{x}_t}\log p_t(\mathbf{x}_t)$.
Since it is intractable in high dimensions, we train a time-conditioned network $s_\phi(\mathbf{x}_t,t)$ to approximate it across noise levels.
In practice, $s_\phi$ is implemented as a discrete noise-level network~\cite{song2019generative,ho2020denoising} or a continuous-time network~\cite{Karras2022Elucidating,song2021score}.
Training is performed 
via minimizing a weighted denoising score matching objective~\citep{song2021score}:
\begin{equation}
\mathcal{L}_{\mathrm{DSM}}(\phi) = \int_{0}^{T} w(t)\, \mathbb{E}_{\mathbf{x}_0,\mathbf{x}_t \mid \mathbf{x}_0} \mathbf g_1 \mathrm dt ,
\label{eq:dsm}
\end{equation}
where $\mathbf g_1 := \left\| s_{\phi}(\mathbf{x}_t, t) - \nabla_{\mathbf{x}_t} \log p_{0t}(\mathbf{x}_t \mid \mathbf{x}_0) \right\|_2^2$, $\mathbf{x}_0 \sim p_0$, and $\mathbf{x}_t \mid \mathbf{x}_0 \sim p_{0t}(\mathbf{x}_t \mid \mathbf{x}_0)$, the weighting function $w(t)$ balances the contributions from different noise levels.

\subsection{Integral Kullback-Leibler Divergence}
Reverse \eqref{eq:fwd-process} via iterative sampling typically requires a large number of sampling steps, leading to high inference latency, which can limit the practicality of diffusion models for PDE solving. This bottleneck can be alleviated by casting accelerated generation as a distillation problem: a multi-step teacher provides high-quality generative behavior, while a student is trained to produce comparable outputs in substantially fewer function evaluations. To encourage the student to learn the teacher’s evolution over time rather than only matching the final output, we adopt a distributional distillation framework that aligns their time-indexed marginals along the diffusion process.

Let $\{p_t(\mathbf{x}_t)\}_{t\in[0,T]}$ denote the teacher marginals induced by the forward diffusion \eqref{eq:fwd-process}, and let the teacher score be given by a frozen network
$s_\nu(\mathbf{x}_t,t)\approx \nabla_{\mathbf{x}_t}\log p_t(\mathbf{x}_t)$.
The student model induces a distribution $q_{\theta,0}$ over clean samples $\mathbf{x}_0$ (e.g., through a few-step generator), and we define its time-$t$ marginals $\{q_{\theta,t}(\mathbf{x}_t)\}_{t\in[0,T]}$ by applying the same forward diffusion \eqref{eq:fwd-process} to $\mathbf{x}_0\sim q_{\theta,0}$, i.e.,
$\mathbf{x}_t\mid \mathbf{x}_0 \sim p_{0t}(\mathbf{x}_t\mid \mathbf{x}_0)$.
Integral Kullback-Leibler (IKL) divergence generalizes the standard KL divergence by integrating the discrepancy between teacher and student marginals across all diffusion times, which was initially used in \citet{luo2023diff}:
\begin{equation}
\label{eq:ikl}
\begin{aligned}
\mathcal{D}^{[0,T]}_{\mathrm{IKL}}(q_\theta\|p)
&:= \int_{0}^{T} w(t)\, \mathcal{D}_{\mathrm{KL}}\!\big(q_{\theta,t}\,\|\,p_t\big)\,\mathrm{d}t \\
&= \int_{0}^{T} w(t)\,
\mathbb{E}_{\mathbf{x}_t \sim q_{\theta,t}}
\left[\log \frac{q_{\theta,t}(\mathbf{x}_t)}{p_t(\mathbf{x}_t)}\right]\mathrm{d}t,
\end{aligned}
\end{equation}
where $w(t)>0$ weights the relative importance of different noise levels.
Minimizing $\mathcal{D}^{[0,T]}_{\mathrm{IKL}}$ promotes trajectory-level consistency between the student and teacher models is essential for high-quality sampling with limited budgets.

\begin{figure*}
    \centering
    \includegraphics[width=1\linewidth]{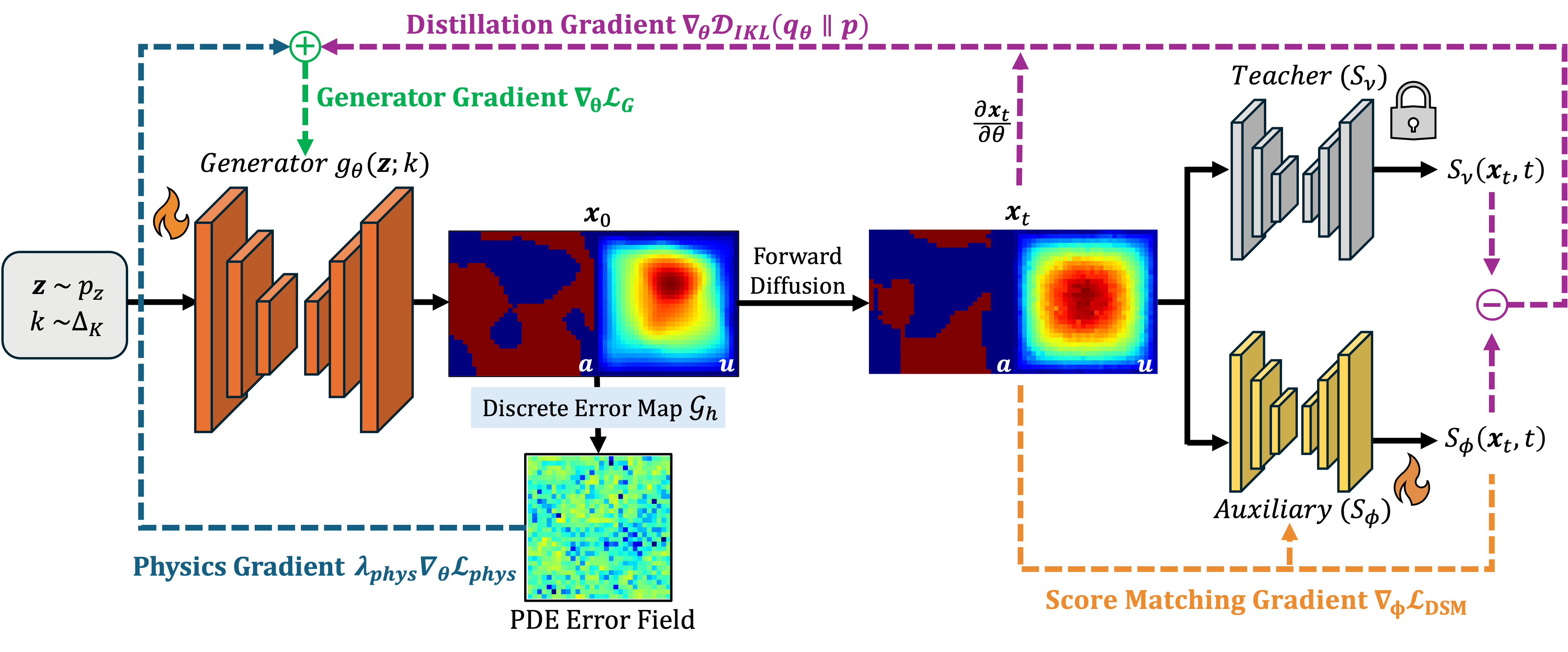}
\caption{
\textbf{Overview of Phys-Instruct.}
We sample latent variable $\mathbf{z}\sim p_z$, step budget $k\sim \Delta_K$, and generate PDE state $\mathbf{x}_0=g_\theta(\mathbf{z};k)$ with the generator $g_\theta$.
Then $\mathbf{x}_0$ is diffused to noisy state $\mathbf{x}_t$, which is evaluated by a frozen teacher diffusion model $s_\nu$ and an auxiliary diffusion model $s_\phi$.
The auxiliary network is trained online via denoising score matching ($\nabla_\phi \mathcal{L}_{\mathrm{DSM}}$). 
The generator $g_\theta$ is updated by the distillation gradient $\nabla_\theta \mathcal{D}_{\mathrm{IKL}}(q_\theta\|p)$ plus a physics gradient $\nabla_\theta \mathcal{L}_{\mathrm{phys}}$ weighted by a coefficient $\lambda_{\mathrm{phys}}$.
}

    \label{fig:physinstruct}
\end{figure*}

\vspace{-3mm}
\section{Phys-Instruct}
While the IKL objective in \eqref{eq:ikl} provides a principled way to distill a multi-step diffusion teacher into a few-step student, matching the teacher distribution alone does not ensure that the student outputs satisfy the target PDE constraints. Existing physics-guided remedies often enforce feasibility at inference time through guidance or iterative correction, which increases the number of function evaluations and erodes the latency gains of few-step generation.

We therefore propose \textbf{Phys-Instruct}, a diffusion distillation framework that \emph{instructs} a few-step student with physics guidance during training. Concretely, we distill a pretrained multi-step diffusion teacher into a \emph{few-step} generator while simultaneously enforcing the PDE constraint \eqref{eq:continuous_pde} through a differentiable PDE error computed on generated samples. In this way, physics enters as explicit training-time supervision for the student, rather than as test-time guidance. As a result, Phys-Instruct produces PDE-consistent samples with a few-step sampler and requires no paired ground-truth solutions during distillation. Figure~\ref{fig:physinstruct} provides a schematic overview of the proposed framework.

\subsection{Phys-Instruct Generator}
\label{subsec:physinstruct_generator}
\citet{luo2023diff} distills a pretrained diffusion teacher into an \emph{implicit one-step} sampler, mapping a Gaussian latent $\mathbf{z}$ to a sample in a single network evaluation. While maximally fast, a single step can be overly restrictive for approximating the teacher's multi-step sampling trajectory. 
Phys-Instruct generalizes this idea by enabling the student sampler to unroll a \emph{small} number of sampling steps, yielding a distilled diffusion model which can take on step budget $k \in \{1,\ldots,K\}$. This provides a simple knob to trade sampling quality for compute while preserving teacher-aligned sampling semantics. Specifically, we parameterize the student generator as a diffusion model $g_\theta$ 
and denote its output with $k$ unrolled steps by $\mathbf{x}_0 = g_\theta(\mathbf{z};k)$, where the latent variable $\mathbf{z}\sim p_z$.

\paragraph{Unified $k$-Step Formulation.}
For a given step budget $k$ and fixed noise level $\sigma_{\mathrm{init}}$, we select $k{+}1$ noise levels from a fixed sampling schedule,
\[
\sigma_k > \sigma_{k-1} > \cdots > \sigma_0,
\qquad \sigma_k = \sigma_{\mathrm{init}},
\]
and draw the initial state
$\mathbf{x}_k := \mathbf{z}$, with 
$\mathbf{z}\sim \mathcal{N}\!\left(\mathbf{0},\sigma_{\mathrm{init}}^2\mathbf{I}\right)$.
We then iteratively apply a student-aligned step operator for $k$ steps ,
\[
\mathbf{x}_{i-1} = \mathsf{Step}\!\left(\mathbf{x}_i;\sigma_i,\sigma_{i-1}, g_\theta\right),
\qquad i=k,k-1,\ldots,1,
\]
where $\mathsf{Step}(\cdot)$ belongs to the sampler family that the student diffusion model structure supports (e.g., the Euler or Heun sampler), and is parameterized by the student model $g_\theta$.
Notably, Phys-Instruct generates samples in $k$ steps \emph{without} any additional PDE-guided correction at inference time; physics guidance is injected only during distillation. We detail how this is achieved in the remainder of this section.

\subsection{PDE Error and Physics Loss.}
\label{sec 4.2}
In practice, all physical fields are defined on a fixed discretized grid
$\mathcal{D}_h=\{\xi_i\}_{i=1}^{N}\subset\Omega$.
We approximate the continuous constraint $\mathcal{G}$ in \eqref{eq:continuous_pde} on this grid with a differentiable \emph{discrete PDE error map} $\mathcal{G}_h$,
implemented using finite-difference stencils or convolutional filters.
The resulting \emph{PDE error (residual) field} is 
\begin{equation}
  r(\mathbf{x})_i := \mathcal{G}_h[\mathbf{x}](\xi_i),\qquad i=1,\dots,N,
\end{equation}
and we summarize it into a scalar physics error via the mean squared PDE error
\begin{equation}
  \mathcal{R}(\mathbf{x})
  := \frac{1}{N}\sum_{i=1}^{N} \bigl\|r(\mathbf{x})_i\bigr\|_2^2.
  \label{eq:phys_loss}
\end{equation}
When boundary conditions are prescribed, we impose them explicitly and compute the error only over interior (unconstrained) nodes, which reduces spurious errors introduced by boundary discretization.

\subsection{Phys-Instruct Objective}
\label{subsec:onestep_physinstruct}

Starting from IKL minimization in \eqref{eq:ikl}, we incorporate the physics guidance via \eqref{eq:phys_loss}. Conceptually, this corresponds to a \emph{constrained} distillation problem that matches the teacher distribution while enforcing physical feasibility \footnote{We rewrite $\mathcal{D}^{[0,T]}_{\mathrm{IKL}}(\cdot \|\, \cdot )$ as $\mathcal{D}_{\mathrm{IKL}}(\cdot  \,\|\, \cdot )$ unless stated otherwise.}:
\begin{equation}
\min_{\theta}\ \mathcal{D}_{\mathrm{IKL}}(q_\theta \,\|\, p)
\quad \text{s.t.}\quad
\mathbb{E}_{\mathbf{x}_0\sim q_\theta}\!\left[\mathcal{R}(\mathbf{x}_0)\right]\le \varepsilon,
\label{eq:constrained_ikl}
\end{equation}
where $\mathcal{R}(\mathbf{x}_0)\ge 0$ denotes the PDE error evaluated on a generated sample $\mathbf{x}_0$ by the student, $\varepsilon > 0$ is some threshold. In our PDE setting, each $\mathbf{x}_0$ encodes the solution field together with the coefficient/forcing fields, and $\mathcal{R}(\mathbf{x}_0)$ is computed with \eqref{eq:phys_loss}.

Building on the unified $k$-step formulation in Sec.~\ref{subsec:physinstruct_generator}, we train a \emph{single} student diffusion model that supports flexible sampling budgets $k\in\{1,\ldots,K\}$ by randomly unrolling $k$ steps during training. Concretely, we draw
\[
k \sim \Delta_K \quad \text{over } \{1,\ldots,K\},
\]
where $\Delta_K$ is a categorical distribution over step budgets, and generate a student sample via the $k$-step generator $\mathbf{x}_0 = g_\theta(\mathbf{z};k)$ with $\mathbf{z}\sim p_z$.

We define the physics guidance $\mathcal{L}_{\mathrm{phys}}$ on student samples:
\begin{equation}
\mathcal{L}_{\mathrm{phys}}(\theta)
:=\mathbb{E}_{\mathbf{z}\sim p_z,\,k\sim \Delta_K,\mathbf{x}_0=g_\theta(\mathbf{z};k)}\Big[
\mathcal{R}\big(\mathbf{x}_0\big)
\Big],
\label{eq:physics_loss}
\end{equation}
This term is applied \emph{only} when updating the student generator: the teacher score network and the auxiliary score model are not directly regularized by $\mathcal{L}_{\mathrm{phys}}$, preserving the original IKL distillation dynamics.

A central challenge is to relate the implemented guided generator update to an explicit
distribution-matching objective. We propose a penalized IKL objective, where distillation is regularized by~\eqref{eq:physics_loss} to ensure the student matches the teacher subject to PDE satisfaction.

The next theorem introduces the penalized IKL objective obtained by augmenting IKL
with the physics guidance term~\eqref{eq:physics_loss}, and provides a score-function identity that decomposes its
gradient into two parts: an IKL distribution-matching term in the spirit of Diff-Instruct and a PDE guidance term.

\begin{theorem}[Score-Function Identity]
\label{thm:phys_impl_one}
Consider the physics-constrained IKL matching problem~\eqref{eq:constrained_ikl}.
As a practical surrogate, we apply a Lagrangian relaxation of \eqref{eq:constrained_ikl} and define the generator objective, treating  $\lambda_{\mathrm{phys}}>0$ as a trade-off hyperparameter: 
\begin{equation}
\mathcal{L}_{G}(\theta)
:=
\mathcal{D}_{\mathrm{IKL}}(q_\theta\|p)
+\lambda_{\mathrm{phys}}\,\mathcal{L}_{\mathrm{phys}}(\theta),
\label{eq:physinstruct_loss}
\end{equation}
where $\mathcal{L}_{\mathrm{phys}}(\theta)$ is given by~\eqref{eq:physics_loss}.
Under mild regularity conditions stated in Appendix~\ref{apx:proof}, the gradient of the implemented generator objective $\mathcal{L}_{G}(\theta)$ admits the identity
\begin{equation}
\label{eq:phys_main_one}
\begin{aligned}
&\nabla_\theta \mathcal{L}_{G}(\theta)
=
\int_{0}^{T} w(t)\,
\mathbb{E}_{\substack{\mathbf{z}\sim p_{\mathbf{z}},\,k\sim\Delta_K,\\
\mathbf{x}_0=g_\theta(\mathbf{z};k),\\ \mathbf{x}_t\sim p_{0t}(\cdot\mid \mathbf{x}_0)}}
\mathbf g_2\mathrm{d}t
\;+\;\lambda_{\mathrm{phys}}\mathbf g_3 \\ 
& \text{where } \mathbf g_2 := 
\mathrm{SG}\!\big\{s_\phi(\mathbf{x}_t,t)-s_\nu(\mathbf{x}_t,t)\big\}^{\!\top}
\,\nabla_\theta \mathbf{x}_t, \\
& \mathbf g_3 := 
\mathbb{E}_{\substack{\mathbf{z}\sim p_{\mathbf{z}},\\ k\sim\Delta_K}}
\Big[
\big(\nabla_{\mathbf{x}_0}\mathcal{R}(\mathbf{x}_0)\big)^{\!\top}\nabla_\theta g_\theta(\mathbf{z};k)
\Big]_{\mathbf{x}_0= g_\theta(\mathbf{z};k)},
\end{aligned}
\end{equation}
$\mathrm{SG}(\cdot)$ denotes the stop-gradient operator, and
$s_\phi(\mathbf{x}_t,t)$ is an auxiliary diffusion score model trained to approximate the marginal score
$\nabla_{\mathbf{x}_t}\log q_{\theta,t}(\mathbf{x}_t)$ of the student-induced distribution at time $t$.
Consequently, the gradient update implemented by~\eqref{eq:phys_main_one} yields a stochastic estimator of the
gradient of the penalized IKL objective with PDE guidance.
\end{theorem}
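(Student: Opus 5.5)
By linearity, $\nabla_\theta\mathcal{L}_G=\nabla_\theta\mathcal{D}_{\mathrm{IKL}}(q_\theta\|p)+\lambda_{\mathrm{phys}}\nabla_\theta\mathcal{L}_{\mathrm{phys}}(\theta)$, and I would treat the two summands separately. The physics term is the easy one. Under the regularity assumptions ($\mathcal{R}$ is $C^1$ since $\mathcal{G}_h$ is a finite-difference/convolutional map, $g_\theta(\mathbf{z};k)$ is $C^1$ in $\theta$, and there is an integrable dominating bound on $\|\nabla_{\mathbf{x}_0}\mathcal{R}\|\,\|\nabla_\theta g_\theta\|$), dominated convergence lets me move $\nabla_\theta$ inside $\mathbb{E}_{\mathbf{z},k}$. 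The sum over $k$ is finite and $p_{\mathbf{z}}$ does not depend on $\theta$, so the chain rule applied to $\mathcal{R}(g_\theta(\mathbf{z};k))$ gives exactly $\mathbf g_3$.

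For the IKL term, I would first write $\mathbf{x}_t$ in reparameterized form. For the forward SDE \eqref{eq:fwd-process} with the affine drift used in practice, $\mathbf{x}_t=\alpha_t\mathbf{x}_0+\beta_t\boldsymbol{\epsilon}$ with $\boldsymbol{\epsilon}\sim\mathcal{N}(0,\mathbf I)$ independent of $\theta$, and $\mathbf{x}_0=g_\theta(\mathbf{z};k)$. Then for each $t$,
\[
\mathcal{D}_{\mathrm{KL}}(q_{\theta,t}\|p_t)=\mathbb{E}_{\mathbf{z},k,\boldsymbol{\epsilon}}\big[\log q_{\theta,t}(\mathbf{x}_t(\theta))-\log p_t(\mathbf{x}_t(\theta))\big].
\]
Differentiating in $\theta$ produces two contributions. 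The first is the pathwise term $\big(\nabla_{\mathbf{x}}\log q_{\theta,t}-\nabla_{\mathbf{x}}\log p_t\big)^\top\nabla_\theta\mathbf{x}_t$, with the scores held fixed. The second is the explicit dependence $\mathbb{E}_{q_{\theta,t}}[\nabla_\theta\log q_{\theta,t}(\mathbf{x})]$ at fixed $\mathbf{x}$. The key step is to show that the second contribution vanishes. I would write it as $\int\nabla_\theta q_{\theta,t}(\mathbf{x})\,\mathrm d\mathbf{x}=\nabla_\theta\int q_{\theta,t}=\nabla_\theta 1=0$, which is the score-function identity.

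Next I substitute the frozen teacher $s_\nu\approx\nabla\log p_t$ and the auxiliary model $s_\phi\approx\nabla\log q_{\theta,t}$. Because the DSM objective \eqref{eq:dsm} evaluated on student samples is minimized by the true student marginal score, $s_\phi$ equals $\nabla\log q_{\theta,t}$ at optimality. Wrapping the score difference in $\mathrm{SG}$ records the fact that no gradient flows through the scores, consistent with the pathwise term. Finally, I would integrate against $w(t)$ and exchange $\nabla_\theta$ with $\int_0^T$ by dominated convergence, which yields the $\mathbf g_2$ term. Summing the two parts gives \eqref{eq:phys_main_one}. Since every term is an expectation over samples $(\mathbf{z},k,\boldsymbol{\epsilon},t)$ that can be drawn directly, the single-sample version is an unbiased estimator whenever $s_\phi$ is exact.

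The main obstacle is justifying the interchanges near $t\to0$. There $q_{\theta,t}$ may be nearly singular, because the student pushforward can concentrate on a low-dimensional set, and the score $\nabla\log q_{\theta,t}$ can blow up. Vanishing of the explicit term also needs enough decay of $\nabla_\theta q_{\theta,t}$ to pass the derivative through $\int q_{\theta,t}=1$. I would handle this by assuming, as part of the regularity conditions, that $w(t)$ vanishes fast enough at $0$ (or that the integral starts at some $t_{\min}>0$). I would also assume integrable envelopes of the form $\sup_\theta\|\nabla\log q_{\theta,t}\|\,\|\nabla_\theta\mathbf{x}_t\|\in L^1$ and $\sup_\theta|\nabla_\theta q_{\theta,t}|\in L^1$. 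For $t>0$ the Gaussian convolution makes $q_{\theta,t}$ smooth and positive, so these assumptions are mild.
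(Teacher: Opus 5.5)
Your proposal is correct and follows essentially the same route as the paper's proof. You split off the physics term via the chain rule, differentiate each time-$t$ KL along the reparameterized sampling path, kill the explicit $\mathbb{E}_{q_{\theta,t}}[\nabla_\theta\log q_{\theta,t}]$ term by the score identity $\int\nabla_\theta q_{\theta,t}=0$, and then substitute the ideal scores $s_\phi,s_\nu$ under a stop-gradient. Your version is in fact a bit more careful than the paper's, which simply assumes the interchanges and a pathwise derivative for a general drift: you spell out the dominated-convergence envelopes and flag the possible singularity of $q_{\theta,t}$ as $t\to0$, at the mild cost of specializing \eqref{eq:fwd-process} to affine drift so that you can write $\mathbf{x}_t=\alpha_t\mathbf{x}_0+\beta_t\boldsymbol{\epsilon}$.
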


\begin{algorithm}[t]
\caption{Phys-Instruct.}
\label{alg:physinstruct}
\begin{algorithmic}[1]
\STATE \textbf{Input:} pre-trained model $s_\nu$; generator $g_\theta$; prior $p_z$; auxiliary $s_\phi$; regularization weight $\lambda_{\mathrm{phys}}$; max step $K$.
\WHILE{not converge}
    \STATE \textit{// Score-learning phase for auxiliary diffusion model}
    \STATE Sample $\mathbf{z} \sim p_z$, $k \sim \Delta_K$, set $\mathbf{x}_0 = g_\theta(\mathbf{z};k)$ and diffuse to $\mathbf{x}_t$ via the forward SDE \eqref{eq:fwd-process}.
    \STATE Update $\phi$ using SGD with loss function \eqref{eq:dsm}. \vspace{0.05 in}
    \STATE \textit{// Instruction + physics phase for generator}
    \STATE Sample $\mathbf{z} \sim p_z$, $k \sim \Delta_K$, set $\mathbf{x}_0 = g_\theta(\mathbf{z};k)$ and diffuse to $\mathbf{x}_t$ via the forward SDE \eqref{eq:fwd-process}.
    \STATE Update $\theta$ using SGD with loss function \eqref{eq:physinstruct_loss}, whose tractable gradient is \eqref{eq:phys_main_one}.
\ENDWHILE
\STATE \textbf{Output:} $\theta, \phi$.
\end{algorithmic}
\end{algorithm}

\begin{proof}
    
We employ Lagrange multipliers to reformulate the constrained optimization into an unconstrained objective comprising a distillation term and a physics-guidance penalty. For the distillation term, we apply the chain rule to the reparameterized diffusion process, decomposing the gradient into a score-matching component and a density sensitivity term. The latter vanishes under expectation due to the conservation of probability mass, which simplifies the update to a weighted integral of the teacher-student score difference. The physics guidance gradient is then derived straightforwardly via standard backpropagation of the penalty term. This completes the proof. We defer the detailed proof to Appendix~\ref{apx:proof} for interested readers.
\end{proof}

It is worth noting that we evaluate the PDE error on each generated sample $\mathbf{x}_0$, rather than on a mean prediction such as $\mathbb{E}[\mathbf{x}_0\mid \mathbf{z}]$. This distinction is critical for nonlinear PDEs: optimizing the mean incurs a Jensen gap that conflicts with distribution matching \citep{zhang2025physics}. By penalizing directly, we avoid this bias and ensure that individual samples (rather than just their aggregate expectation) adhere to physical laws.

\begin{remark}[Amortized physics correction]
\label{rem:amortized_phys}
The second term in~\eqref{eq:phys_main_one} provides an explicit physics correction direction for the
student generator.
Indeed, for a fixed $(\mathbf z,k)$, the chain rule gives
\[
\nabla_\theta \mathcal{R}\!\big(g_\theta(\mathbf z;k)\big)
=
\Big(\nabla_{\mathbf x_0}\mathcal{R}(\mathbf x_0)\Big)^{\!\top}\nabla_\theta g_\theta(\mathbf z;k)
\Big|_{\mathbf x_0=g_\theta(\mathbf z;k)}.
\]
Thus, the physics-guided component in~\eqref{eq:phys_main_one} pushes the generator outputs along
$-\nabla_{\mathbf x_0}\mathcal{R}(\mathbf x_0)$, i.e., toward lower-residual samples.
This can be viewed as amortizing a physics feasibility-improving correction into the student during training,
thereby reducing the reliance on additional test-time guidance when operating at small step budgets.
\end{remark}

\subsection{Training Procedure}
The Phys-Instruct distillation procedure assumes a pretrained diffusion model $s_\nu$ is given. The teacher diffusion models are trained with PDE data to reverse \eqref{eq:fwd-process} that progressively adds noise to a sample from a real data distribution $p_0$, where $R(\mathbf{x}_0^*)=0$ if $\mathbf{x}_0^* \sim p_0$. The teacher diffusion model $s_\nu$ is freezed for the whole training procedure.

Phys-Instruct trains the student generator $g_\theta$ through two alternative phases between updating the auxiliary diffusion model $s_\phi$, and updating the student generator $g_\theta$, with both $g_\theta$ and $s_\phi$ initialized with the teacher's parameters. The former phase follows the standard diffusion model learning procedure, i.e., minimizing loss function \eqref{eq:dsm}, with a slight change that $\mathbf{x}_0$ is not from the training dataset but generated from $g_\theta$. The resulting auxiliary model $s_\phi(\mathbf{x}_t,t)$ provides an estimation of $\log q_{\theta,t}(\mathbf{x}_t)$. The latter phase updates the generator’s parameter $\theta$ using $\mathcal{L}_{G}(\theta)$, whose tractable gradient $\nabla_\theta \mathcal{L}_{G}(\theta)$ is derived as \eqref{eq:phys_main_one}.
When the training converges, $s_\phi \approx s_\nu$, and the gradient of \eqref{eq:dsm} is approximately zero. The whole procedure is summarized in Algorithm \ref{alg:physinstruct}.

\section{Experiments}

\paragraph{PDE Benchmarks.} We conduct experiments on 5 different PDE benchmarks, including Darcy flow, the Poisson equation, the non-bounded Navier-Stokes equation, the Burgers' equation, and the inhomogeneous Helmholtz equation. The dataset generation mainly follow released scripts in \cite{huang2024diffusionpde} and \cite{li2021fourier}. Different benchmarks are of different resolutions as indicated in Table \ref{tab:uncond}. More details about the PDE benchmarks are deferred to Appendix \ref{apx:pde}.

\vspace{-3mm}
\paragraph{Training.}
For all benchmarks, Phys-Instruct utilizes the EDM teacher model \cite{Karras2022Elucidating} and jointly generates both the coefficient field $\mathbf{a}$ and the solution field $\mathbf{u}$, except for the Burgers' equation benchmark, where we store only the space–time field $\mathbf{u}$. Teacher models are trained with 50,000 data points. The teacher models for Darcy flow, Poisson, and non-bounded Navier-Stokes are trained with 4 V100 GPUs for 6 hours. The teacher models for Burgers' and Helmholtz come from \cite{huang2024diffusionpde}. 
The step operator follows the deterministic second-order Heun sampler in~\cite{Karras2022Elucidating}.
The distillation is conducted with $K=4$ with 2 A100 GPUs for around 4 hours.

\paragraph{Baselines and Protocols.}  We evaluate four diffusion-based baselines using the deterministic second-order Heun sampler following ~\cite{Karras2022Elucidating}. To assess both generation quality and computational efficiency, we report results at 200 steps representing the baselines' full potential, and at 4 steps matching the budget of our proposed method.
Our comparison includes the plain EDM teacher model ~\cite{Karras2022Elucidating} as a purely data-driven reference without physical constraints. We then consider two methods that enforce physics during sampling: DiffusionPDE~\cite{huang2024diffusionpde} augments EDM by injecting physics guidance specifically during the final 20\% of the sampling trajectory following diffusion posterior sampling (DPS) \citep{chung2023diffusion}, with this guidance extending to the last 50\% when step=4. Similarly, CoCoGen~\cite{jacobsen2024cocogen} starts from the DiffusionPDE output and performs an additional ten-step physics-guided refinement, or two extra steps specifically for the four-step setting. Finally, we compare against PIDM~\cite{bastek2025physics}, which distinguishes itself by incorporating physics guidance directly into the training objective rather than during sampling.

\begin{table}[tb]
\centering
\small
\caption{
Unconditional generation results on PDE benchmarks.
We report PDE Error$\downarrow$ and the number of sampling steps per sample (Step).
NS$_{\mathrm{unb.}}$ denotes the nonbounded Navier--Stokes benchmark, and the lowest PDE Error for each benchmark is \textbf{bolded}.
}
\label{tab:uncond}
\adjustbox{center=\textwidth}{\begin{tabular}{lcccccc}
\toprule
& & \multicolumn{5}{c}{PDE Error ($\downarrow$)} \\
\cmidrule(lr){3-7}
Method & Step & Darcy Flow & Poisson & $\text{NS}_{\text{unb.}}$ & Burgers & Helmholtz \\
\midrule
\multicolumn{2}{l}{Resolution} 
  & $32{\times}32{\times}2$
  & $32{\times}32{\times}2$
  & $64{\times}64{\times}2$
  & $128{\times}128{\times}1$
  & $128{\times}128{\times}2$\\
\midrule

\multirow{4}{*}{Phys-Instruct}
 & 1
  & $3.15{\times}10^{-1}$ 
  & $2.16{\times}10^{-2}$ 
  & $3.15{\times}10^{-2}$
  & $1.71{\times}10^{-1}$ 
  & $3.75{\times}10^{-1}$ \\
 & 2
  & $2.18{\times}10^{-1}$ 
  & $5.18{\times}10^{-3}$ 
  & $1.46{\times}10^{-2}$ 
  & $1.18{\times}10^{-1}$ 
  & $3.00{\times}10^{-1}$ \\
 & 3
  & $1.35{\times}10^{-1}$ 
  & $4.79{\times}10^{-3}$ 
  & $3.65{\times}10^{-3}$ 
  & $7.17{\times}10^{-2}$ 
  & $2.10{\times}10^{-1}$ \\
 & 4
  & $\mathbf{8.49{\times}10^{-2}}$ 
  & $\mathbf{3.99{\times}10^{-3}}$ 
  & $\mathbf{3.21{\times}10^{-3}}$
  & $\mathbf{5.54{\times}10^{-2}}$ 
  & $\mathbf{1.46{\times}10^{-1}}$ \\
\midrule

EDM \cite{Karras2022Elucidating} & 4 
  & $196.03$ 
  & $16.72$ 
  & $3.33{\times}10^{-1}$
  & $144.23$ 
  & $79.77$\\
DiffusionPDE \cite{huang2024diffusionpde} & 4 
  & $191.50$ 
  & $16.68$ 
  & $3.30{\times}10^{-1}$ 
  & $141.37$ 
  & $78.97$\\
CoCoGen \cite{jacobsen2024cocogen} & 4 
  & $190.87$ 
  & $16.60$
  & $3.24{\times}10^{-1}$ 
  & $135.65$ 
  & $77.34$\\
PIDM \cite{bastek2025physics} & 4 
  & $101.04$  
  & $22.64$  
  & $3.44{\times}10^{-1}$
  & $97.76$ 
  & $77.86$\\
\midrule

EDM \cite{Karras2022Elucidating} & 200 
  & $5.59{\times}10^{-1}$ 
  & $3.14{\times}10^{-2}$ 
  & $3.60{\times}10^{-2}$ 
  & $8.60{\times}10^{-1}$ 
  & $3.54{\times}10^{-1}$\\
DiffusionPDE \cite{huang2024diffusionpde} & 200 
  & $4.38{\times}10^{-1}$ 
  & $2.64{\times}10^{-2}$ 
  & $3.23{\times}10^{-2}$
  & $5.62{\times}10^{-1}$ 
  & $3.38{\times}10^{-1}$\\
CoCoGen \cite{jacobsen2024cocogen} & 200 
  & $3.31{\times}10^{-1}$ 
  & $1.40{\times}10^{-2}$
  & $2.92{\times}10^{-2}$ 
  & $3.84{\times}10^{-1}$ 
  & $3.04{\times}10^{-1}$\\
PIDM \cite{bastek2025physics} & 200 
  & $5.26{\times}10^{-1}$  
  & $2.36{\times}10^{-2}$ 
  & $3.54{\times}10^{-2}$
  & $5.14{\times}10^{-1}$ 
  & $3.15{\times}10^{-1}$\\
\bottomrule
\end{tabular}}
\end{table}

\subsection{Main Results}

\begin{figure}
    \centering
    \includegraphics[width=0.6\linewidth]{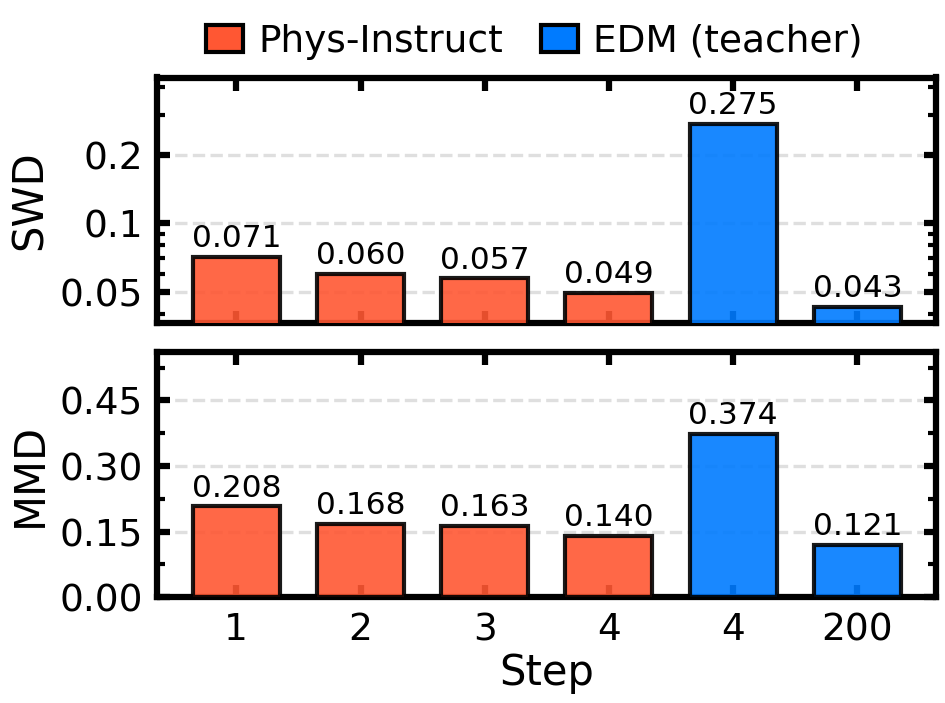}
    \caption{Distributional discrepancy on the Burgers' equation benchmark measured by sliced Wasserstein distance (SWD, top) and Maximum Mean Discrepancy (MMD, bottom; reported as $\sqrt{\mathrm{MMD}^2}$), where lower is better.
}
    \label{fig:dist_metrics_burgers}
\end{figure}
As our primary physics consistency metric, we report \emph{PDE Error}$\downarrow$, the root-mean-square of the PDE error computed per sample as
$\sqrt{\mathcal{R}(\mathbf{x}_0)}$ and then averaged over samples; the results are summarized in Table~\ref{tab:uncond}.
The efficiency trade-off is visualized in Figure~\ref{fig:burgers_latency_and_samples}, and the distributional discrepancy, measured by SWD and MMD, is reported in Figure~\ref{fig:dist_metrics_burgers}.

It is shown in Table~\ref{tab:uncond} that Phys-Instruct is particularly effective in the low-step regime. 
Increasing the sampling steps from 1 to 4 yields consistent improvements in PDE error across all benchmarks. Figure~\ref{fig:darcy_uncond} further provides qualitative generation results of Darcy flow with 1-4 step Phys-Instruct.
Notably, the one-step Phys-Instruct already outperforms the best Darcy flow and Burgers' baseline even when the baseline uses 200 sampling steps, and the four-step Phys-Instruct attains the lowest PDE error overall. 
In contrast, at the same low sampling budget (Step=4), diffusion baselines often produce samples with substantially larger PDE error. 
With such aggressive step sizes, the correction evaluation in second-order samplers can become sensitive to model and discretization errors, which may degrade the final sample quality, as illustrated in Figure~\ref{fig:burgers_latency_and_samples}. 
Phys-Instruct alleviates this issue because the student is distilled with the Heun sampler at fixed noise levels, i.e., it matches the teacher along a specific discrete trajectory. 
This effectively learns a discrete transport map tailored to the few-step schedule, rather than relying on large-step integration of the teacher's continuous-time dynamics. 

The efficiency advantage is further illustrated in Figure~\ref{fig:burgers_latency_and_samples}. Latency measures end-to-end sampling time and includes any physics guidance or refinement applied during sampling.
Diffusion baselines exhibit a clear compute--quality trade-off: achieving lower PDE error typically requires substantially more sampling steps, whereas very small step budgets often degrade physics consistency.
In contrast, Phys-Instruct attains low PDE error at significantly lower latency.
Increasing the student budget from Step=1 to Step=4 yields consistent improvements on Burgers', reducing PDE error from $1.71{\times}10^{-1}$ to $5.54{\times}10^{-2}$, while remaining in the low-latency regime.

Beyond physics consistency, we measure distributional discrepancy to held-out data on Burgers using SWD and MMD, reported in Figure~\ref{fig:dist_metrics_burgers}.
Few-step Phys-Instruct substantially improves both metrics compared to low-budget teacher sampling.
At Step$=4$, SWD and MMD improve by 82\% and 63\%, respectively.
Meanwhile, the four-step Phys-Instruct achieves SWD and MMD that are comparable to those of the high-budget EDM teacher, albeit slightly worse.
These results indicate effective distillation and suggest that incorporating physics guidance during training does not induce mode collapse.
The remaining gap is expected, since Phys-Instruct is distilled from the teacher under a constrained step budget rather than directly optimizing SWD/MMD to match the data distribution.

\subsection{Ablation Studies}
\label{sec:ablation}

\begin{figure}
    \centering
    \includegraphics[width=0.6\linewidth]{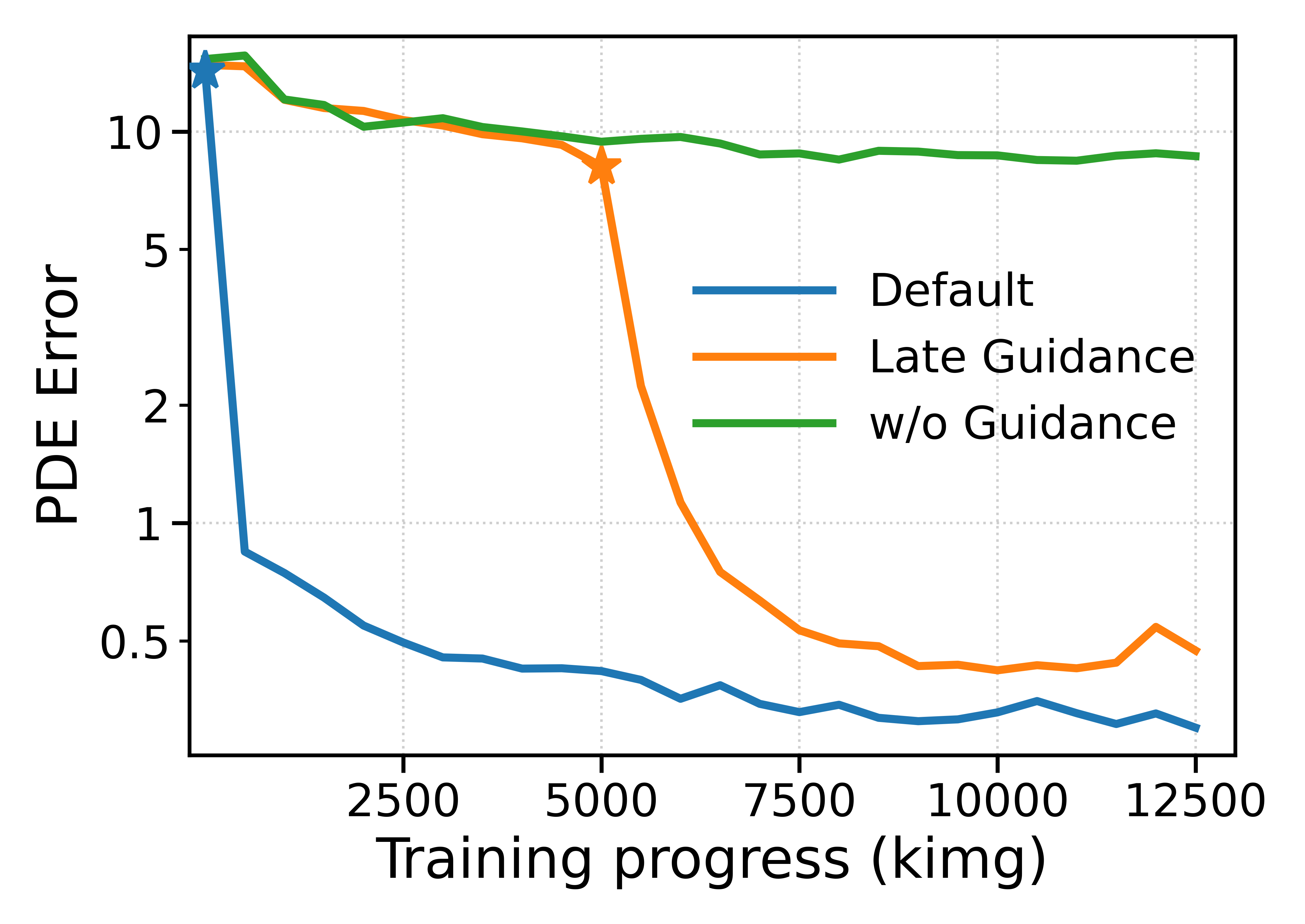}
    \caption{
    Darcy-flow PDE error with one dtep Phys-Instruct during distillation under different physics-guidance schedules; stars indicate when guidance starts.}
    \label{fig:pde_tick}
\end{figure}

We ablate the role of physics guidance in Phys-Instruct to understand how it improves physical consistency.
Table~\ref{tab:ablation_main} reports results on Darcy flow with fixed $\sigma_{\text{init}}=1.5$ under one-step sampling, measured by the PDE Error.
Removing PDE guidance (w/o Guidance) leads to degradation in physics consistency, with a PDE Error of $8.75$ on Darcy flow. In contrast, adding the physics guidance with the default weight ($\lambda_{\text{phys}}=5{\times}10^{-3}$) decreases PDE Error by approximately $27$ times. 
Figure~\ref{fig:pde_tick} further illustrates the training dynamics.
The trajectory without physics guidance plateaus at a high PDE Error, whereas applying physics guidance from the start drives a faster and more sustained decrease in PDE Error.
When guidance is activated only after the PDE error plateaus, the error decreases after the intervention, but does not fully match the best performance.
Overall, these results indicate that PDE guidance provides an effective optimization signal and that introducing it throughout distillation yields the best result.

Additionally, Table~\ref{tab:ablation_main} indicates that the method is reasonably robust to the choice of $\lambda_{\text{phys}}$ within a practical range. Overly weak physics weighting noticeably degrades physical consistency, whereas a larger weight performs comparably to the default. Additional sensitivity analyses for $\lambda_{\text{phys}}$ and $\sigma_{\text{init}}$ are provided in Appendix~\ref{app:sens}.

\begin{table}[tb]
\centering
\small
\setlength{\tabcolsep}{3.6pt}
\renewcommand{\arraystretch}{1.05}
\caption{
Ablation of Physics guidance in \textbf{one-step} Phys-Instruct training on Darcy flow.
\textbf{Default} denotes our full training configuration.
We report the PDE Error ($\downarrow$). Kimg indicates the starting point of adding physics guidance.
}
\label{tab:ablation_main}
\begin{tabular}{l c c c c}
\toprule
Setting & Guidance & kimg & $\lambda_{\mathrm{phys}}$ & PDE Error$\downarrow$  \\
\midrule
Default & \cmark & 0   & $5\mathrm{e}{-3}$ & $\mathbf{3.15{\times}10^{-1}}$     \\
w/o Guidance & \xmark & --  & 0               & $8.75$    \\
Late Guidance& \cmark & 5k & $5\mathrm{e}{-3}$ & $4.07{\times}10^{-1}$    \\
Weak Guidance  & \cmark & 0   & $1\mathrm{e}{-3}$ &$3.59{\times}10^{-1}$  \\
Strong Guidance & \cmark & 0   & $1\mathrm{e}{-2}$ & $3.18{\times}10^{-1}$ \\
\bottomrule
\end{tabular}
\end{table}

\subsection{Downstream tasks}
\label{sec:downstream}

\begin{table}[htbp]
\centering
\setlength{\tabcolsep}{3.5pt}
\renewcommand{\arraystretch}{0.95}
\caption{
Helmholtz forward problem.
We report prediction error and PDE Error ($\downarrow$).
Step denotes the number of sampling steps per sample.
}
\label{tab:helmholtz_forward}
\begin{tabular}{l c cc}
\toprule
\textbf{Forward} & & \multicolumn{2}{c}{Helmholtz} \\
\cmidrule(lr){3-4}
Method & Step & Rel.\ Error & PDE Error \\
\midrule
PhysInstruct  & 1    & $\mathbf{1.86\%}$ & $\mathbf{4.18{\times}10^{-1}}$ \\
DiffusionPDE  & 2000 & $9.69\%$          & $1.48$ \\
CoCoGen       & 2000 & $9.70\%$          & $1.07$ \\
PIDM          & 2000 & $13.76\%$         & $1.36$ \\
\bottomrule
\end{tabular}
\end{table}

We further show that the distilled few-step students provide strong \emph{unconditional priors} for downstream PDE tasks, including forward and inverse problems.
Specifically, we freeze the unconditional Phys-Instruct student as a backbone prior and attach a ControlNet-style conditional branch~\citep{zhang2023adding} to incorporate observations.
Due to space constraints, details of the conditional pipeline implementation and baselines are deferred to Appendix~\ref{apx:conditional_physinstruct}. 
We showcase the forward problem on Helmholtz and the inverse problem on Darcy flow and Poisson in this section. Extra downstream task experiments can be found in Appendix~\ref{app:extra}.

Table~\ref{tab:helmholtz_forward} shows that, despite using only one sampling step, Phys-Instruct achieves substantially better Helmholtz forward accuracy and physics consistency than diffusion baselines that require 2000 steps. 
In particular, Phys-Instruct reduces the relative error by about $5$--$7$ times and lowers the PDE Error by roughly $2.5$--$3.5$ times compared to the 2000-step methods. 
In Figure~\ref{fig:helm_fwd}, we visualize the results for solving the forward problem of the Helmholtz equation.
A similar trend holds for inverse problems in Table~\ref{tab:inverse}. 
On Darcy flow, Phys-Instruct improves the absolute error by around $15$--$20$ times and reduces the PDE Error by about $7$--$8$ times relative to the 2000-step baselines. 
On Poisson, Phys-Instruct achieves a $\!1.5$--$2$ times reduction in relative error and a $\!5$--$6$ times reduction in PDE Error. 
Overall, Phys-Instruct delivers multi-fold gains in both accuracy and physics consistency while operating at orders-of-magnitude lower sampling cost.

\begin{table}[tb]
\centering
\setlength{\tabcolsep}{3.5pt}
\renewcommand{\arraystretch}{0.95}
\caption{
Inverse problems on Darcy flow and Poisson equation.
We report prediction error and PDE Error ($\downarrow$).
Prediction error is absolute error for Darcy flow and relative error for Poisson.
Step denotes the number of sampling steps per sample.
}
\label{tab:inverse}

\begin{tabular}{l c cc cc}
\toprule
\textbf{Inverse} & & \multicolumn{2}{c}{Darcy Flow} & \multicolumn{2}{c}{Poisson} \\
\cmidrule(lr){3-4}\cmidrule(lr){5-6}
Method & Step & Abs.\ Error & PDE Error & Rel.\ Error & PDE Error ($\times 10^{-2}$) \\
\midrule
PhysInstruct  & 1    & $\mathbf{0.49\%}$ & $\mathbf{0.22}$ & $\mathbf{8.64\%}$ & $\mathbf{1.45}$ \\
DiffusionPDE  & 2000 & $7.71\%$          & $1.68$                         & $13.34\%$         & $9.14$ \\
CoCoGen       & 2000 & $7.44\%$          & $1.59$                         & $13.17\%$         & $7.10$ \\
PIDM          & 2000 & $9.83\%$          & $1.76$                         & $17.48\%$         & $7.69$ \\
\bottomrule
\end{tabular}
\vspace{-5pt}
\end{table}

\section{Conclusion and Future Work}
We propose Phys-Instruct, a physics-guided distillation framework that compresses a diffusion teacher into a few-step generator for PDE solution generation.
Phys-Instruct augments IKL-based distillation objective with physics constraints, yielding students that produce physically consistent samples with dramatically fewer function evaluations.
Empirically, the few-step Phys-Instruct achieves substantially lower PDE Error than diffusion baselines that require hundreds of sampling steps, and the distilled one-step model serves as a strong unconditional backbone for downstream conditional tasks.
Since Phys-Instruct does not rely on test-time physics correction, it enables fast inference while retaining high physical fidelity.

There are several promising directions for future work.
First, it would be valuable to combine Phys-Instruct with more advanced distillation techniques, such as SIM~\citep{Luo2024SIM} and Uni-Instruct~\citep{wang2025uni}, to better transfer the knowledge from the teacher while improving the diversity of the student-generated samples.
Second, for conditional problems, lightweight adaptation mechanisms (e.g., adapters~\citep{mou2024t2i} or LoRA~\citep{hu2022lora}) could offer a parameter-efficient alternative to full conditional models on top of the pretrained Phys-Instruct priors.
Finally, extending Phys-Instruct to higher-dimensional PDEs and to resolution-robust or resolution-free settings~\citep{yao2025fundps} is an important next step toward practical scientific applications.
We hope Phys-Instruct provides a useful foundation for building efficient physics-aware generative models for PDE solving.

\section*{Impact Statement}
This work aims to advance the fundamental state of Machine Learning. While the advancement of the field generally carries potential societal consequences, this specific work does not present unique ethical or societal issues that require separate highlight.

\bibliography{ref}

\newpage
\appendix
\setcounter{tocdepth}{2}
\tableofcontents
\allowdisplaybreaks
\newpage

\newpage
\appendix
\onecolumn
\section{PDE Benchmark Details}
\label{apx:pde}

This section provides benchmark details for the five PDEs used in our experiments.
A PDE sample is denoted by $\mathbf{x}=(\mathbf{a},\mathbf{u})$,
where $\mathbf{a}$ collects coefficient/forcing/initial-condition fields and $\mathbf{u}$
is the corresponding solution field or the target solution state for dynamic problems.

For each benchmark, all fields are represented on a fixed grid
$\mathcal{D}_h=\{\xi_i\}_{i=1}^{N}\subset\Omega$.
For static problems on $\Omega_x\subset\mathbb{R}^d$, we take $\xi=\mathbf{c}\in\Omega_x$ and
$\mathcal{D}_h=\{\mathbf{c}_i\}_{i=1}^{N_x}$.
For dynamic problems on $\Omega=\Omega_x\times\Omega_t$, we use the space--time coordinate
$\xi=(\mathbf{c},\tau)^\top$ and a tensor grid
$\mathcal{D}_h=\{(\mathbf{c}_i,\tau_j)\}_{i=1,j=1}^{N_x,N_t}$.
Following \citet{huang2024diffusionpde}, we generate $50{,}000$ samples as training data and $20{,}000$ as test data per PDE.
Darcy flow coefficient fields are drawn from Gaussian random fields (GRFs), and solutions are computed by solving the resulting sparse grid-discretized elliptic system.
Non-bounded Navier–Stokes data follow the FNO benchmark data-generation pipeline and FEM solvers \cite{li2021fourier}.
Poisson and inhomogeneous Helmholtz data are generated using second-order finite-difference schemes, and Burgers’ data follow \citet{zheng2025sindy, zheng2025muti} with a spectral solver to produce space–time fields.
All fields are stored as discretized arrays on the target grids.

On the same grid $\mathcal{D}_h$, we follow Section~\ref{sec 4.2}
to compute the benchmark physics penalty:
$r(\mathbf{x})_i := \mathcal{G}_h[\mathbf{x}](\xi_i)$ and $\mathcal{R}(\mathbf{x})$ is given by~\eqref{eq:phys_loss}.
When boundary values are prescribed, we enforce them explicitly and evaluate $\mathcal{R}(\mathbf{x})$
only on interior (unconstrained) nodes to avoid boundary-discretization artifacts.
The explicit form of $\mathcal{G}_h$ (and thus $\mathcal{R}(\mathbf{x})$) is benchmark-dependent and will be specified
for each PDE below.

The PDE datasets are used to learn the data distribution $p_0$ by training the multi-step diffusion
\emph{teacher} score model $s_\nu$, which is then frozen, and to construct evaluation/conditional test instances.
In contrast, student distillation does not require paired ground-truth supervision:
during distillation, clean samples are generated on-the-fly as $\mathbf{x}_0=g_\theta(\mathbf{z};k)$,
and physics guidance is applied by evaluating the PDE error penalty $\mathcal{R}(\mathbf{x}_0)$ on these samples.

\subsection{Darcy Flow}
We consider the static Darcy flow on $\Omega_x=(0,1)^2$ with homogeneous Dirichlet boundary:
\begin{equation}
\left\{
\begin{aligned}
-\nabla\!\cdot\!\big(a(\mathbf{c})\,\nabla u(\mathbf{c})\big) &= q(\mathbf{c}),
&& \mathbf{c}\in\Omega_x,\\
u(\mathbf{c}) &= 0,
&& \mathbf{c}\in\partial\Omega_x,
\end{aligned}
\right.
\end{equation}
where the forcing is fixed as $q(\mathbf{c})\equiv 1$.

Each dataset sample is obtained by first drawing a random permeability field and then solving the PDE.
Following DiffusionPDE~\citet{huang2024diffusionpde}, we construct heterogeneous coefficients from a Gaussian
random field sampled on the target grid. Specifically, we draw a zero-mean GRF on a $32\times 32$ grid
(using a Gaussian-correlated spectral sampler) and define a binary-valued permeability by pointwise thresholding:
\[
\mathbf{a}_{ij}=
\begin{cases}
12, & \mathbf{m}_{ij}\ge 0,\\
3,  & \mathbf{m}_{ij}< 0,
\end{cases}
\qquad (i,j)\in\mathcal{D}_h,
\]
where $\mathbf{m}\in\mathbb{R}^{32\times 32}$ denotes the sampled GRF and $\mathcal{D}_h$ is the uniform grid over $\Omega_x$.
Given $\mathbf{a}$, we solve the corresponding discretized Darcy system with homogeneous Dirichlet boundary conditions,
and store the resulting fields on the same $32\times 32$ grid.

We denote the discretized fields by boldface symbols:
$\mathbf{a}\in\mathbb{R}^{32\times 32}$ and $\mathbf{u}\in\mathbb{R}^{32\times 32}$ are the grid-sampled
permeability and solution fields, respectively. Thus, each sample is
$\mathbf{x}=(\mathbf{a},\mathbf{u})$, where $\mathbf{a}$ specifies the PDE instance and $\mathbf{u}$ is its solution.

For a discrete sample $\mathbf{x}=(\mathbf{a},\mathbf{u})$ on $\mathcal{D}_h\subset\Omega_x$, we define the interior PDE error:
\begin{equation}
r_{\mathrm{Darcy}}(\mathbf{x})_i
:=
-\big(\nabla_h\cdot(\mathbf{a}\,\nabla_h \mathbf{u})\big)_i - q_i,
\qquad i\in\mathcal{I}_{\mathrm{int}},
\end{equation}
where $\nabla_h$ and $\nabla_h\cdot$ are the discrete gradient/divergence operators consistent with the data-generation
discretization, and $\mathcal{I}_{\mathrm{int}}$ denotes interior grid nodes.

\subsection{Poisson Equation}

We consider the static Poisson equation on $\Omega_x=(0,1)^2$ with homogeneous Dirichlet boundary:
\begin{equation}
\left\{
\begin{aligned}
\nabla^{2}u(\mathbf{c}) &= a(\mathbf{c}),
&& \mathbf{c}\in\Omega_x,\\
u(\mathbf{c}) &= 0,
&& \mathbf{c}\in\partial\Omega_x.
\end{aligned}
\right.
\end{equation}

Each dataset sample is generated by first drawing a random forcing field and then solving the PDE.
Following ~\citet{huang2024diffusionpde}, we set $a=\mu$ where
$\mu$ is sampled from the Gaussian random field.
We solve for $u$ using a second-order finite-difference scheme, enforcing the homogeneous Dirichlet boundary
via the standard mollifier $\sin(\pi c_1)\sin(\pi c_2)$ in the data-generation pipeline.
We store the discretized fields on a $32\times 32$ grid and denote each sample by
$\mathbf{x}=(\mathbf{a},\mathbf{u})$.

On $\mathcal{D}_h\subset\Omega_x$, we evaluate the benchmark-consistent discrete PDE error map
$\mathcal{G}_h$ using the second-order Laplacian, yielding the interior PDE error
\begin{equation}
r_{\mathrm{Pois}}(\mathbf{x})_i := (\Delta_h \mathbf{u})_i - \mathbf{a}_i,
\qquad i\in \mathcal{I}_{\mathrm{int}},
\end{equation}
where $\Delta_h$ is the standard second-order discrete Laplacian on the uniform grid.

\subsection{Inhomogeneous Helmholtz Equation}

We consider the static inhomogeneous Helmholtz equation on $\Omega_x=(0,1)^2$ with homogeneous Dirichlet boundary:
\begin{equation}
\left\{
\begin{aligned}
\nabla^{2}u(\mathbf{c}) + k^{2}u(\mathbf{c}) &= a(\mathbf{c}),
&& \mathbf{c}\in\Omega_x,\\
u(\mathbf{c}) &= 0,
&& \mathbf{c}\in\partial\Omega_x,
\end{aligned}
\right.
\end{equation}
where we set $k=1$ (note that this reduces to the Poisson equation when $k=0$).

This benchmark follows the same generation pipeline as Poisson and we store each sample $\mathbf{x}=(\mathbf{a},\mathbf{u})$ on a $128\times 128$ grid.

Accordingly, the discrete PDE error differs from Poisson only by the additional reaction term:
\begin{equation}
r_{\mathrm{Helm}}(\mathbf{x})_i := (\Delta_h \mathbf{u})_i + k^2 \mathbf{u}_i - \mathbf{a}_i,
\qquad i\in \mathcal{I}_{\mathrm{int}},
\end{equation}
with the same $\Delta_h$ as above.

\subsection{Non-bounded Navier--Stokes Equation}

We study the 2D incompressible Navier--Stokes benchmark on $\Omega_x=(0,1)^2$
with periodic (non-bounded) boundary conditions in the vorticity formulation:
\begin{equation}
\left\{
\begin{aligned}
\partial_{\tau}\omega(\mathbf{c},\tau)
+ \mathbf{v}(\mathbf{c},\tau)\cdot\nabla \omega(\mathbf{c},\tau)
&= \nu\,\Delta\omega(\mathbf{c},\tau) + q(\mathbf{c}),
&& (\mathbf{c},\tau)\in\Omega_x\times(0,T],\\
\nabla\cdot \mathbf{v}(\mathbf{c},\tau) &= 0,
&& (\mathbf{c},\tau)\in\Omega_x\times(0,T],
\end{aligned}
\right.
\end{equation}
where $\nu=10^{-3}$, and $\omega=\nabla\times\mathbf{v}$ is the scalar vorticity.
The forcing follows the fixed pattern
\begin{equation}
q(\mathbf{c})=\frac{1}{10}\Big(\sin\big(2\pi(c_1+c_2)\big)+\cos\big(2\pi(c_1+c_2)\big)\Big).
\end{equation}

Following ~\citet{huang2024diffusionpde}, the initial condition is sampled from a Gaussian random field
\begin{equation}
\omega_0 \sim \mathcal{N}\!\big(0,\ 7^{1.5}(-\Delta+49\mathbf I)^{-2.5}\big).
\end{equation}
Given $\omega_0$ and the fixed forcing $q$, we solve the dynamics using a pseudo-spectral method in the
stream-function formulation: we advance the vorticity equation in the Fourier domain and evaluate nonlinear terms
in physical space via inverse FFTs. We simulate for $1$ second using $10$ time steps.
In the dataset convention of DiffusionPDE, we take the terminal snapshot index as $T=10$.
We store $(\omega_0,\omega_T)$ on a $64\times 64$ spatial grid and denote the resulting discrete sample by
$\mathbf{x}=(\boldsymbol{\omega}_0,\boldsymbol{\omega}_T)$, i.e., $\mathbf{a}=\boldsymbol{\omega}_0$ and
$\mathbf{u}=\boldsymbol{\omega}_T$, yielding resolution $64\times 64\times 2$.

As in ~\citet{huang2024diffusionpde}, since $T$ is large in the benchmark convention, we do not evaluate the full time-dependent
PDE error at the terminal state. Instead, we adopt a simplified terminal-time constraint evaluated directly on the predicted vorticity snapshot and define
\begin{equation}
r_{\mathrm{NS}}(\mathbf{x})_{i,j}
:= (\partial_{x,h}\boldsymbol{\omega}_T)_{i,j} + (\partial_{y,h}\boldsymbol{\omega}_T)_{i,j},
\qquad (i,j)\in\mathcal{I}_{\mathrm{all}},
\end{equation}
where $\partial_{x,h}$ and $\partial_{y,h}$ are central differences.

\subsection{Burgers' Equation}

We consider the 1D viscous Burgers' equation on $\Omega_x=(0,1)$ with periodic boundary conditions:
\begin{equation}
\left\{
\begin{aligned}
\partial_{\tau}u(c,\tau) + \partial_{c}\!\left(\frac{u(c,\tau)^{2}}{2}\right)
- \nu\,\partial_{cc}u(c,\tau) &= 0,
&& (c,\tau)\in\Omega_x\times(0,T],\\
u(c,0) &= u_{0}(c),
&& c\in\Omega_x,
\end{aligned}
\right.
\end{equation}
where $\nu=0.01$, we take the physical simulation horizon $T=1$, and enforce periodicity in space
(e.g., $u(0,\tau)=u(1,\tau)$ for all $\tau\in(0,T]$).

Following ~\citet{huang2024diffusionpde} and the FNO benchmark procedure, the initial condition is sampled
as a Gaussian random field
\begin{equation}
u_0 \sim \mathcal{N}\!\big(0,\ 625(-\Delta+25\mathbf I)^{-2}\big).
\end{equation}
Given $u_0$, we solve the PDE using a spectral method and simulate over $t\in[0,1]$ with $128$ uniformly spaced time snapshots (including $t=0$), producing a space--time solution field $u_{0:T}$.
We store $\mathbf{u}=u_{0:T}$ on a $128\times 128$ tensor grid over $(c,\tau)$.

Since the time dimension is densely discretized, we compute the pointwise PDE error by finite differences:
\begin{equation}
r_{\mathrm{Burg}}(\mathbf{x})_{i,j}
:= (\partial_{\tau,h}\mathbf{u})_{i,j}
+ \big(\partial_{c,h}(\tfrac12 \mathbf{u}^2)\big)_{i,j}
- \nu\,(\partial_{cc,h}\mathbf{u})_{i,j},
\qquad (i,j)\in\mathcal{I}_{\mathrm{all}},
\end{equation}
where $\partial_{\tau,h}$, $\partial_{c,h}$ and $\partial_{cc,h}$ are standard discrete operators with periodic
wrapping in space.

\section{Proof of Theorem ~\ref{thm:phys_impl_one}}
\label{apx:proof}

\begin{proof}[Proof of Theorem~\ref{thm:phys_impl_one}]
We first justify the penalized objective used in the analysis.
The constrained formulation \eqref{eq:constrained_ikl} is not optimized directly in our implementation.
Instead, we adopt a penalty relaxation and treat $\lambda_{\mathrm{phys}}>0$ as a trade-off hyperparameter,
which augments the IKL matching term by a soft penalty on the constraint violation.
Concretely, consider the Lagrangian-type surrogate
\[
\mathcal{D}^{[0,T]}_{\mathrm{IKL}}(q_\theta\|p)
+\lambda_{\mathrm{phys}}
\Big(\mathbb{E}_{\mathbf{x}_0\sim q_\theta}\big[\mathcal{R}(\mathbf{x}_0)\big]-\varepsilon\Big).
\]
Since $-\lambda_{\mathrm{phys}}\varepsilon$ is a constant independent of $\theta$,
it does not affect the gradient with respect to $\theta$ and can be dropped.
Moreover, under our unified $k$-step generator parameterization,
a clean student sample satisfies $\mathbf{x}_0=g_\theta(\mathbf{z};k)$ with
$\mathbf{z}\sim p_{\mathbf{z}}$ and $k\sim\Delta_K$, which induces the implicit distribution $q_\theta$.
Therefore,
\[
\mathbb{E}_{\mathbf{x}_0\sim q_\theta}\big[\mathcal{R}(\mathbf{x}_0)\big]
=
\mathbb{E}_{\mathbf{z}\sim p_z,\,k\sim \Delta_K,\mathbf{x}_0=g_\theta(\mathbf{z};k)}\Big[\mathcal{R}\big(\mathbf{x}_0\big)\Big].
\]
Combining the above yields the implemented penalized objective
\[
\mathcal{L}_{G}(\theta)
=
\mathcal{D}^{[0,T]}_{\mathrm{IKL}}(q_\theta\|p)
+\lambda_{\mathrm{phys}}\,
\mathbb{E}_{\mathbf{z}\sim p_z,\,k\sim \Delta_K,\mathbf{x}_0=g_\theta(\mathbf{z};k)}\Big[\mathcal{R}\big(\mathbf{x}_0\big)\Big],
\]
which is exactly the objective optimized by the guided generator update.

Throughout, we assume mild regularity so that differentiation can be interchanged with the time integral
and expectations, and the sampling procedure that generates $\mathbf{x}_t\sim p_{0t}(\cdot\mid \mathbf{x}_0)$ admits a
pathwise derivative with respect to $\mathbf{x}_0$ (and thus with respect to $\theta$ through $\mathbf{x}_0=g_\theta(\mathbf{z};k)$).
We also work under the ideal score limit in the theorem, i.e.,
$s_\phi(\mathbf{x}_t,t)=\nabla_{\mathbf{x}_t}\log q_{\theta,t}(\mathbf{x}_t)$ and $s_\nu(\mathbf{x}_t,t)=\nabla_{\mathbf{x}_t}\log p_t(\mathbf{x}_t)$
for a.e.\ $t\in[0,T]$ and $\mathbf{x}_t\sim q_{\theta,t}$.

Fix $t\in[0,T]$. The time-$t$ KL term is
\[
\mathcal{D}_{\mathrm{KL}}(q_{\theta,t}\|p_t)
=
\mathbb{E}_{\mathbf{x}_t\sim q_{\theta,t}}\!\big[\log q_{\theta,t}(\mathbf{x}_t)-\log p_t(\mathbf{x}_t)\big].
\]
By the sampling procedure in~\eqref{eq:physinstruct_loss}, $\mathbf{x}_t$ is obtained by drawing
$k\sim\Delta_K$, $\mathbf{z}\sim p_\mathbf{z}$, setting $\mathbf{x}_0=g_\theta(\mathbf{z};k)$, and sampling $\mathbf{x}_t\sim p_{0t}(\cdot\mid \mathbf{x}_0)$.
Under the above regularity, differentiating the KL term along this sampling path yields
\begin{equation}\label{eq:kl_pathwise_pf_new}
\nabla_\theta \mathcal{D}_{\mathrm{KL}}(q_{\theta,t}\|p_t)
=
\mathbb{E}\Big[
\nabla_\theta \log q_{\theta,t}(\mathbf{x}_t)
+
\big(\nabla_{\mathbf{x}_t}\log q_{\theta,t}(\mathbf{x}_t)-\nabla_{\mathbf{x}_t}\log p_t(\mathbf{x}_t)\big)^\top \nabla_\theta \mathbf{x}_t
\Big],
\end{equation}
where the expectation is over $k\sim\Delta_K$, $\mathbf{z}\sim p_\mathbf{z}$, $\mathbf{x}_0=g_\theta(\mathbf{z};k)$, and
$\mathbf{x}_t\sim p_{0t}(\cdot\mid \mathbf{x}_0)$. The first term vanishes by the score identity
$\mathbb{E}_{\mathbf{x}_t\sim q_{\theta,t}}[\nabla_\theta\log q_{\theta,t}(\mathbf{x}_t)]=0$. Therefore,
\begin{equation}\label{eq:kl_pathwise_pf_final_new}
\nabla_\theta \mathcal{D}_{\mathrm{KL}}(q_{\theta,t}\|p_t)
=
\mathbb{E}\Big[
\big(\nabla_{\mathbf{x}_t}\log q_{\theta,t}(\mathbf{x}_t)-\nabla_{\mathbf{x}_t}\log p_t(\mathbf{x}_t)\big)^\top \nabla_\theta \mathbf{x}_t
\Big].
\end{equation}

Recalling
$\mathcal{D}^{[0,T]}_{\mathrm{IKL}}(q_\theta\|p)
=\int_0^T w(t)\,\mathcal{D}_{\mathrm{KL}}(q_{\theta,t}\|p_t)\,\mathrm{d}t$
and interchanging $\nabla_\theta$ with the integral, we obtain
\begin{equation}\label{eq:grad_IKL_pathwise_pf_new}
\nabla_\theta \mathcal{D}^{[0,T]}_{\mathrm{IKL}}(q_\theta\|p)
=
\int_0^T w(t)\,
\mathbb{E}\Big[
\big(\nabla_{\mathbf{x}_t}\log q_{\theta,t}(\mathbf{x}_t)-\nabla_{\mathbf{x}_t}\log p_t(\mathbf{x}_t)\big)^\top \nabla_\theta \mathbf{x}_t
\Big]\mathrm{d}t.
\end{equation}
Under the ideal score limit, $\nabla_{\mathbf{x}_t}\log q_{\theta,t}(\mathbf{x}_t)=s_\phi(\mathbf{x}_t,t)$ and
$\nabla_{\mathbf{x}_t}\log p_t(\mathbf{x}_t)=s_\nu(\mathbf{x}_t,t)$, so \eqref{eq:grad_IKL_pathwise_pf_new} becomes
\begin{equation}\label{eq:grad_IKL_scores_pf_new}
\nabla_\theta \mathcal{D}^{[0,T]}_{\mathrm{IKL}}(q_\theta\|p)
=
\int_0^T w(t)\,
\mathbb{E}\Big[
\big(s_\phi(\mathbf{x}_t,t)-s_\nu(\mathbf{x}_t,t)\big)^\top \nabla_\theta \mathbf{x}_t
\Big]\mathrm{d}t.
\end{equation}
In the implemented update, the score term is treated as a constant weight with respect to $\theta$,
i.e., gradients are stopped through the score evaluations. Equivalently, we may insert the
stop-gradient operator $\mathrm{SG}\{\cdot\}$ on the score difference inside the expectation in
\eqref{eq:grad_IKL_scores_pf_new} without changing the resulting $\theta$-gradient, since the only
$\theta$-dependence being differentiated is carried by the pathwise term $\nabla_\theta x_t$.
Therefore,
\begin{equation}\label{eq:grad_match_DI_pf_new}
\nabla_\theta \mathcal{D}^{[0,T]}_{\mathrm{IKL}}(q_\theta\|p)
=
\int_{0}^{T} w(t)\,
\mathbb{E}\Big[
\mathrm{SG}\!\big\{ s_\phi(\mathbf{x}_t,t)-s_\nu(\mathbf{x}_t,t)\big\}^{\!\top}\nabla_\theta \mathbf{x}_t
\Big]\,dt,
\end{equation}
where the expectation is over $\mathbf{z}\sim p_\mathbf{z}$, $k\sim\Delta_K$, $\mathbf{x}_0=g_\theta(\mathbf{z};k)$, and
$\mathbf{x}_t\sim p_{0t}(\cdot\mid \mathbf{x}_0)$.

For the physics term, recall from~\eqref{eq:physinstruct_loss} that the penalty enters as
$\lambda_{\mathrm{phys}}\mathcal{L}_{\mathrm{phys}}(\theta)$, where
\[
\mathcal{L}_{\mathrm{phys}}(\theta)
=
\mathbb{E}_{\mathbf{z}\sim p_{\mathbf{z}},\,k\sim\Delta_K,\ \mathbf{x}_0=g_\theta(\mathbf{z};k)}
\big[\mathcal{R}(\mathbf{x}_0)\big].
\]
Under the same regularity assumptions, the chain rule yields
\begin{equation}\label{eq:grad_phys_pf_new}
\nabla_\theta \mathcal{L}_{\mathrm{phys}}(\theta)
=
\mathbb{E}_{\mathbf{z}\sim p_{\mathbf{z}},\,k\sim\Delta_K}
\Big[
\big(\nabla_{\mathbf{x}_0}\mathcal{R}(\mathbf{x}_0)\big)^{\!\top}\,
\nabla_\theta g_\theta(\mathbf{z};k)
\Big]_{\mathbf{x}_0=g_\theta(\mathbf{z};k)}.
\end{equation}
Combining~\eqref{eq:grad_match_DI_pf_new} with~\eqref{eq:grad_phys_pf_new} gives
\begin{equation*}
    \begin{aligned}
        \nabla_\theta \mathcal{L}_{G}(\theta)
=
&\int_{0}^{T} w(t)\,
\mathbb{E}_{\substack{\mathbf{z}\sim p_{\mathbf{z}},\,k\sim\Delta_K\\
\mathbf{x}_0=g_\theta(\mathbf{z};k),\,\mathbf{x}_t\sim p_{0t}(\cdot\mid \mathbf{x}_0)}}\Big[
\mathrm{SG}\!\big\{ s_\phi(\mathbf{x}_t,t)-s_\nu(\mathbf{x}_t,t)\big\}^{\!\top}\nabla_\theta \mathbf{x}_t
\Big]\mathrm{d}t
\;\\&+\;
\lambda_{\mathrm{phys}}
\mathbb{E}_{\mathbf{z}\sim p_{\mathbf{z}},\,k\sim\Delta_K}
\Big[
\big(\nabla_{\mathbf{x}_0}\mathcal{R}(\mathbf{x}_0)\big)^{\!\top}\,
\nabla_\theta g_\theta(\mathbf{z};k)
\Big]_{\mathbf{x}_0=g_\theta(\mathbf{z};k)},
    \end{aligned}
\end{equation*}
which is exactly~\eqref{eq:phys_main_one}. This completes the proof.
\end{proof}

\section{Additional Distributional Metrics}
\label{app:dist_metrics}

In addition to PDE Error, we report distributional discrepancy between generated samples and held-out real data using two-sample distances.
These metrics quantify how closely the generated fields match the data distribution, complementing physics-consistency evaluations.

\paragraph{Preprocessing and Normalization.}
All distances are computed on the same field representation used for evaluation.
For multi-field problems (e.g., coefficient--solution pairs), we apply channel-wise standardization using statistics computed from the real data:
for each channel $c$, we transform both real and generated samples by
\begin{equation}
\tilde{\mathbf{x}}^{(c)} = \frac{\mathbf{x}^{(c)} - \mu^{(c)}_{\text{real}}}{\sigma^{(c)}_{\text{real}} + \epsilon},
\end{equation}
where $\mu^{(c)}_{\text{real}}$ and $\sigma^{(c)}_{\text{real}}$ are the mean and standard deviation over the real dataset, and $\epsilon$ is a small constant.
For single-field benchmarks such as Burgers' equation, this reduces to standardization of the scalar field.

\paragraph{Sliced Wasserstein Distance (SWD) \citep{wu2019sliced,kolouri2019generalized}.}
Let $P$ and $Q$ denote the empirical distributions of standardized generated and real samples, respectively.
SWD approximates the Wasserstein distance in high dimensions by averaging 1D Wasserstein distances over random projections:
\begin{equation}
\mathrm{SWD}(P,Q)
=
\frac{1}{K}\sum_{k=1}^{K}
W_1\!\left( \langle \mathbf{v}_k, \mathbf{x} \rangle_{\mathbf{x}\sim P},\ 
            \langle \mathbf{v}_k, \mathbf{x} \rangle_{\mathbf{x}\sim Q}
\right),
\end{equation}
where $\{\mathbf{v}_k\}_{k=1}^{K}$ are i.i.d.\ random unit vectors and $W_1(\cdot,\cdot)$ is the 1D Wasserstein distance computed via sorting.
We use a fixed random seed and $K=512$ random projections, which provides a stable estimate of SWD at a moderate computational cost for high-dimensional fields.

\paragraph{Maximum Mean Discrepancy (MMD) \citep{sutherland2017generative}.}
We also report MMD with a multi-scale RBF kernel as a complementary two-sample statistic:
\begin{equation}
\mathrm{MMD}^2(P,Q)
=
\mathbb{E}_{\mathbf{x},\mathbf{x}'\sim P}[k(\mathbf{x},\mathbf{x}')]
+
\mathbb{E}_{\mathbf{y},\mathbf{y}'\sim Q}[k(\mathbf{y},\mathbf{y}')]
-
2\,\mathbb{E}_{\mathbf{x}\sim P,\mathbf{y}\sim Q}[k(\mathbf{x},\mathbf{y})],
\end{equation}
with $k(\mathbf{x},\mathbf{y}) = \exp\!\left(-\|\mathbf{x}-\mathbf{y}\|_2^2/(2\sigma^2)\right)$.
Bandwidths are selected from the real data using the median pairwise-distance heuristic:
we compute $\hat{\sigma}$ as the median of pairwise distances between a random subset of real samples, and use the multi-scale set
$\sigma \in \{\hat{\sigma}/4,\ \hat{\sigma}/2,\ \hat{\sigma},\ 2\hat{\sigma},\ 4\hat{\sigma}\}$.
We report $\sqrt{\mathrm{MMD}^2}$ for readability.

\section{Extra Ablation Studies}
\label{app:sens}
This appendix provides additional ablations that complement the main-text analysis of physics guidance in one-step Phys-Instruct. Unless otherwise stated, all runs follow the \textbf{Default} configuration on Darcy flow, and we report the PDE Error evaluated on one-step samples.
Moreover, we also include ablation study of step operator as described in \ref{subsec:physinstruct_generator}.

\subsection{Sensitivity to the physics guidance weight $\lambda_{\text{phys}}$}

\begin{table}[t]
\centering
\small
\setlength{\tabcolsep}{4.2pt}
\renewcommand{\arraystretch}{1.05}
\caption{
Sensitivity of training stability to $\lambda_{\mathrm{phys}}$.
All runs use the same setup as \textbf{Default} except for different $\lambda_{\mathrm{phys}}$.
We label runs as \emph{Stable} (training completes normally), \emph{Weak} (training completes but yields limited PDE error improvement), \emph{Unstable} (fails early before the PDE error meaningfully decreases), and \emph{Diverged} (mode collapsed, distillation failed).
}
\label{tab:phys_stability}
\begin{tabular}{c l}
\toprule
$\lambda_{\mathrm{phys}}$ & Status \\
\midrule
$1\mathrm{e}{-4}$ & Unstable (early failure; PDE error remains high) \\
$5\mathrm{e}{-4}$ & Stable, but weak improvement \\
$1\mathrm{e}{-3}$ & Stable \\
$5\mathrm{e}{-3}$ & Stable \\
$1\mathrm{e}{-2}$ & Stable \\
$5\mathrm{e}{-2}$ & Diverged \\
\bottomrule
\end{tabular}
\end{table}

Figure~\ref{fig:app_weight} (a) plots the training trajectories of the PDE Error under different physics guidance weights $\lambda_{\text{phys}}$. Across a broad and practically relevant range, physics guidance consistently accelerates the decay of the PDE error and leads to a lower final plateau. In particular, \emph{moderate} guidance weights yield the most reliable improvements, showing both rapid early reduction and sustained progress throughout training. Figure~\ref{fig:app_weight} (b) further summarizes the final PDE error versus $\lambda_{\text{phys}}$, showing that a moderate range around $5\mathrm{e}{-3}$ achieves the lowest PDE errors.

When $\lambda_{\text{phys}}$ is too small, the physics signal becomes ineffective: training either fails to make meaningful progress or plateaus at a noticeably higher PDE error. Conversely, overly large $\lambda_{\text{phys}}$ can destabilize training and lead to divergence, as summarized in Table~\ref{tab:phys_stability}. These results suggest a \emph{stability--PDE error trade-off}: insufficient weighting under-enforces the PDE constraint, while excessive weighting can dominate the distillation objective and break optimization.

Overall, the default choice of $\lambda_{\text{phys}}$ lies in a stable regime that achieves strong PDE error reduction without sacrificing training stability, and the method remains robust to moderate variations around this value.

\subsection{Sensitivity to the initial noise level $\sigma_{\text{init}}$}

Table~\ref{tab:ablation_sigma} studies the sensitivity of one-step Phys-Instruct to the initial noise level $\sigma_{\text{init}}$, holding all other settings fixed. We find that performance is not highly sensitive to $\sigma_{\text{init}}$ within the tested range: the final PDE Error varies only mildly, indicating that the default $\sigma_{\text{init}}$ is not a fragile tuning choice. This robustness is desirable in practice, as it suggests that one-step distillation with physics guidance does not require aggressive calibration of the initial noise level to achieve good physical consistency.
\label{app:ablation}

\begin{figure}[t]
    \centering
    \begin{subfigure}[h]{0.47\linewidth}
        \centering
        \includegraphics[width=\linewidth]{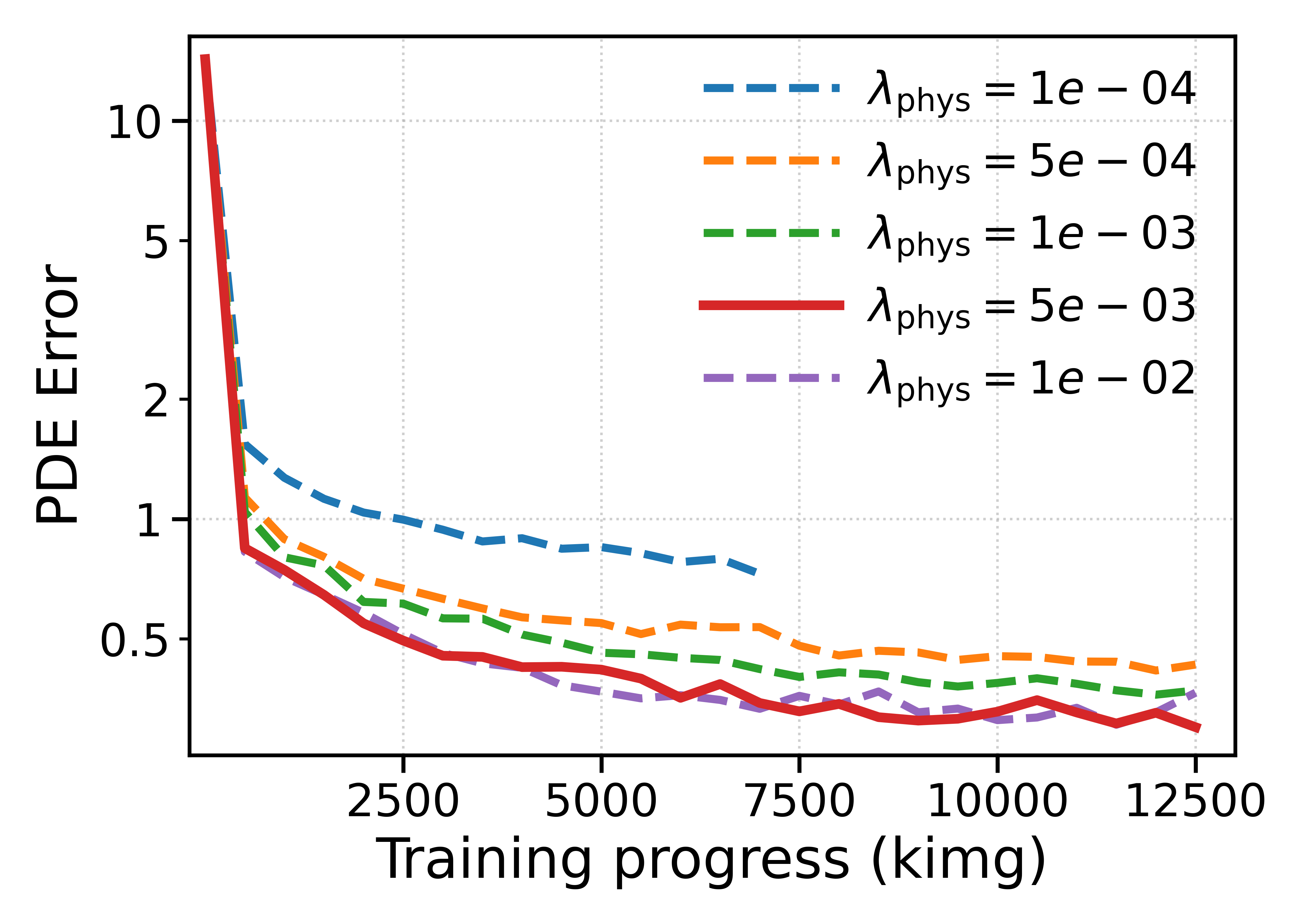}
        \caption{Training dynamics.}
        \label{fig:app_weight_curve}
    \end{subfigure}\hfill
    \begin{subfigure}[h]{0.48\linewidth}
        \centering
        \raisebox{-5mm}{\includegraphics[width=\linewidth]{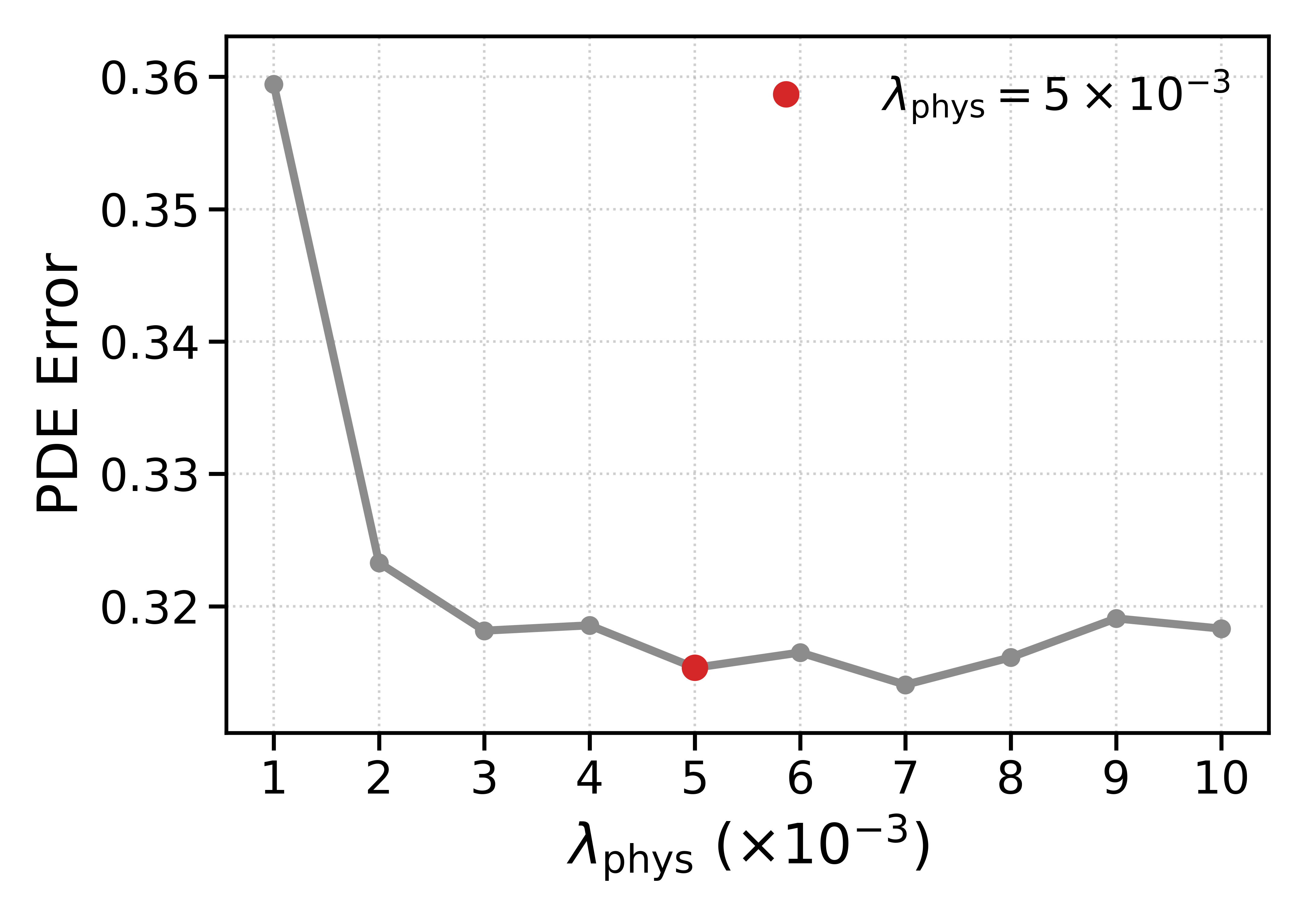}}
        \par\vspace{-2mm}
        \caption{Final PDE error vs.\ $\lambda_{\mathrm{phys}}$.}
        \label{fig:app_weight_summary}
    \end{subfigure}

    \caption{Sensitivity of PDE error to the physics guidance weight $\lambda_{\mathrm{phys}}$ on Darcy flow.
(a) We plot PDE error (log scale) versus training progress for one-step Phys-Instruct with different $\lambda_{\mathrm{phys}}$ (all other settings match \textbf{Default}).
Moderate $\lambda_{\mathrm{phys}}$ yields consistently lower PDE errors, while too small $\lambda_{\mathrm{phys}}$ leads to early instability and a higher error plateau. Diverged runs with overly large $\lambda_{\mathrm{phys}}$ are omitted.
(b) Summary of the final PDE error across $\lambda_{\mathrm{phys}}$ with finer grids between $1\mathrm{e}{-3}$ and $1\mathrm{e}{-2}$. The red dot denotes the PDE error of default $\lambda_{\mathrm{phys}}$. }

    \label{fig:app_weight}
\end{figure}

\begin{table}[t]
\centering
\small
\setlength{\tabcolsep}{4.8pt}
\renewcommand{\arraystretch}{1.08}
\caption{
Sensitivity check of the one-step noise level $\sigma_{\text{init}}$.
All settings use the default configuration except for $\sigma_{\text{init}}$.
}
\label{tab:ablation_sigma}
\begin{tabular}{l c c }
\toprule
$\sigma_{\text{init}}$ & PDE Error ($\times 10^{-1}$)$\downarrow$  \\
\midrule
1.00 &$3.21$    \\
1.50 (default) & $3.15$   \\
3.00 & $3.18$     \\
\bottomrule
\end{tabular}
\end{table}

\subsection{Ablation on step operator}
Selecting a step operator is equivalent to specifying the few-step generator’s sampling scheme. We use Heun as the default, and additionally report results with an Euler sampler on Darcy flow. As shown in Table~\ref{tab:sampler}, switching between Heun and Euler changes Phys-Instruct’s performance only marginally, since both employ a fixed, small-step trajectory.

\begin{table}[t]
\centering
\small
\caption{
Unconditional generation results on Darcy flow at resolution $32{\times}32{\times}2$ with two different sampling methods.
We report PDE Error$\downarrow$ and the number of sampling steps per sample (Step).
The lowest PDE Error within each sampler column is \textbf{bolded}.
}
\label{tab:sampler}
\setlength{\tabcolsep}{4.5pt}
\renewcommand{\arraystretch}{0.95}
\begin{tabular}{l c cc}
\toprule
& & \multicolumn{2}{c}{PDE Error ($\downarrow$)} \\
\cmidrule(lr){3-4}
Method & Step & Euler & Huen \\
\midrule

\multirow{4}{*}{Phys-Instruct}
 & 1 & $3.15{\times}10^{-1}$ & $3.15{\times}10^{-1}$ \\
 & 2 & $2.31{\times}10^{-1}$ & $2.18{\times}10^{-1}$ \\
 & 3 & $1.20{\times}10^{-1}$ & $1.35{\times}10^{-1}$ \\
 & 4 & $8.94{\times}10^{-2}$ & $\mathbf{8.49{\times}10^{-2}}$ \\

\bottomrule
\end{tabular}
\end{table}

\section{Conditional Downstream Tasks with ControlNet}
\label{apx:conditional_physinstruct}
The unconditional Phys-Instruct models learn a prior over PDE fields without paired supervision.
In many downstream applications, however, we are given paired data
\begin{equation}
\{(\mathbf{y}_i,\mathbf{x}_i^\star)\}_{i=1}^N,
\end{equation}
where $\mathbf{y}_i$ encodes problem-specific information, such as coefficient fields, boundary conditions,
or observation masks, and $\mathbf{x}_i^\star$ is the corresponding ground-truth PDE field
(solution and, when applicable, coefficients).
Conditional Phys-Instruct leverages this supervision while still using the PDE error as physics guidance.

\paragraph{ControlNet-Style Conditional Generator.}
We build a conditional generator on top of the unconditional Phys-Instruct backbone,
\begin{equation}
\hat{\mathbf{x}} = g_\theta(\mathbf{z},\mathbf{y}), \qquad \mathbf{z}\sim p_z,
\end{equation}
where $\mathbf{y}$ provides task-specific conditioning.
For conditional downstream tasks, we instantiate $g_\theta$ using the one-step Phys-Instruct generator as the backbone,
so the explicit sampling step index $k$ is omitted.
Our backbones are EDM following the SongUNet architecture \citep{Karras2022Elucidating,song2021score}, consisting of an encoder--decoder U-Net with multi-resolution residual blocks
and skip connections.

Following ~\citet{zhang2023adding}, we attach a \emph{control branch} that processes $\mathbf{y}$ and injects
its features into the backbone through zero-initialized residual connections.
Let $\mathbf{e}$ denote the shared timestep (noise-level) embedding produced by the mapping network.
We first map the condition $\mathbf{y}$ to the channel space of the backbone input $\mathbf{z}$ using a convolutional encoder $\psi$,
and form a condition-augmented input for the control branch:
\begin{equation}
\label{eq:control_xc_final}
\mathbf{z}^{c} = \mathbf{z} + \mathrm{ZeroConv}_{\mathrm{in}}\!\big(\psi(\mathbf{y})\big),
\end{equation}
where $\mathrm{ZeroConv}_{\mathrm{in}}$ is a zero-initialized $1\times1$ convolution, hence $\mathbf{z}^{c}\approx \mathbf{z}$
at initialization.
The control encoder mirrors the SongUNet encoder (same resolution hierarchy and block layout) and processes $(\mathbf{z}^{c},\mathbf{e})$
to produce multi-resolution hints $\{\mathbf{h}_\ell(\mathbf{z}^{c},\mathbf{e})\}_\ell$ aligned with the backbone decoder resolutions.
Let $\mathbf{f}_\ell$ denote the backbone feature map immediately before a decoder UNet block at resolution level $\ell$.
We inject the conditioning by projecting the corresponding hint with a zero-initialized $1\times1$ convolution $\mathrm{Proj}_\ell$ and
adding it to the backbone feature:
\begin{equation}
\label{eq:control_inject_final}
\mathbf{f}_\ell' = \mathbf{f}_\ell + \mathrm{Proj}_\ell\!\big(\mathbf{h}_\ell(\mathbf{z}^{c},\mathbf{e})\big),
\qquad
\mathbf{f}_{\ell,\mathrm{out}} = \mathrm{Block}_\ell(\mathbf{f}_\ell',\mathbf{e}).
\end{equation}
Since both $\mathrm{ZeroConv}_{\mathrm{in}}$ and $\mathrm{Proj}_\ell$ are zero-initialized, the injected residuals are initially inactive,
so the conditional generator starts from the pretrained unconditional behavior and gradually learns to incorporate $\mathbf{y}$ through the
residual pathways.

\paragraph{Data Term and Physics Regularization.}
For each pair $(\mathbf{y}, \mathbf{x}^\star)$ and latent noise $\mathbf{z}$, the conditional generator produces a prediction
\[
    \hat{\mathbf{x}} = g_\theta(\mathbf{z}, \mathbf{y}).
\]
Depending on the task, $\mathbf{x}^\star$ may contain full fields or only partially observed components.
We write $\mathcal{O}_\mathbf{y}(\cdot)$ for a generic observation mask that extracts the components of $\mathbf{x}$ that are supervised under condition $\mathbf{y}$.

The data-fidelity term is then
\begin{equation}
    \mathcal{L}_{\mathrm{data}}(\theta)
    =
    \mathbb{E}_{(\mathbf{y}, \mathbf{x}^\star),\, \mathbf{z}}
    \Big[
        \big\|
            \mathcal{O}_\mathbf{y}(\hat{\mathbf{x}})
            -
            \mathcal{O}_\mathbf{y}(\mathbf{x}^\star)
        \big\|_2^2
    \Big].
    \label{eq:cond_data_loss}
\end{equation}

In addition, we evaluate the PDE error on the full prediction $\hat{\mathbf{x}}$ using the same $\mathcal{G}_h$ as in Sec.~\ref{sec 4.2}, leading to the same physics error defined in Eq.~\eqref{eq:physics_loss}.

\paragraph{Conditional Phys-Instruct Objective.}

The Conditional Phys-Instruct generator is trained with a supervised objective regularized by the PDE error:
\begin{equation}
    \mathcal{L}^{\mathrm{cond}}_{G}(\theta)
    =
    \mathcal{L}_{\mathrm{data}}(\theta)
    +
    \lambda_{\mathrm{phys}}\,
    \mathcal{L}_{\mathrm{phys}}(\theta),
    \label{eq:cond_physinstruct_loss}
\end{equation}
where $\lambda_{\mathrm{phys}} > 0$ controls the strength of physics guidance.

This formulation allows a pretrained unconditional Phys-Instruct model to be easily and fleaxibly adapted to handling different downstream conditional tasks by changing the definition of $\mathbf{y}$ and the observation operator $\mathcal{O}_\mathbf{y}$, while the $\mathcal{L}_{\mathrm{phys}}$ term encourages physically consistent predictions.

\paragraph{Baselines and Metircs.}
We compare against DiffusionPDE~\citep{huang2024diffusionpde}, CoCoGen~\citep{jacobsen2024cocogen}, and PIDM~\citep{bastek2025physics}.
All baselines solve conditional tasks via diffusion posterior sampling (DPS), using observation and/or PDE guidance at inference time.
DiffusionPDE follows the original design with DPS guidance, CoCoGen appends an additional ten-step PDE-guided refinement after sampling, and PIDM uses the same DPS-style inference as in DiffusionPDE but replaces the backbone with an unconditional PIDM model.
We report prediction error (relative $L_2$ for continuous fields and absolute error for discrete fields) together with PDE Error.
Since DPS-based solvers typically require long trajectories to converge, we run baseline inference with Step$=2000$ as adopted in \citet{huang2024diffusionpde}.

\paragraph{Tasks.} 
We mainly consider three kinds of down stream tasks: (1) Forward problem: inference solution $\mathbf{u}$ from known coefficients $\mathbf{a}$; (2) Inverse problem:  predict coefficient $\mathbf{a}$ given observed $\mathbf{u}$; (3) Reconstruction problem: predict the whole $\mathbf{x}^\star$ given its partial observation.

\section{Extra Results}
\label{app:extra}

\subsection{Qualitative results of Darcy flow}
We show the generated Darcy flow samples under 1-4 steps with Phys-Instruct in Figure~\ref{fig:darcy_uncond}. With sampling steps increasing, PDE error drops.
\begin{figure}
    \centering
    \includegraphics[width=1\linewidth]{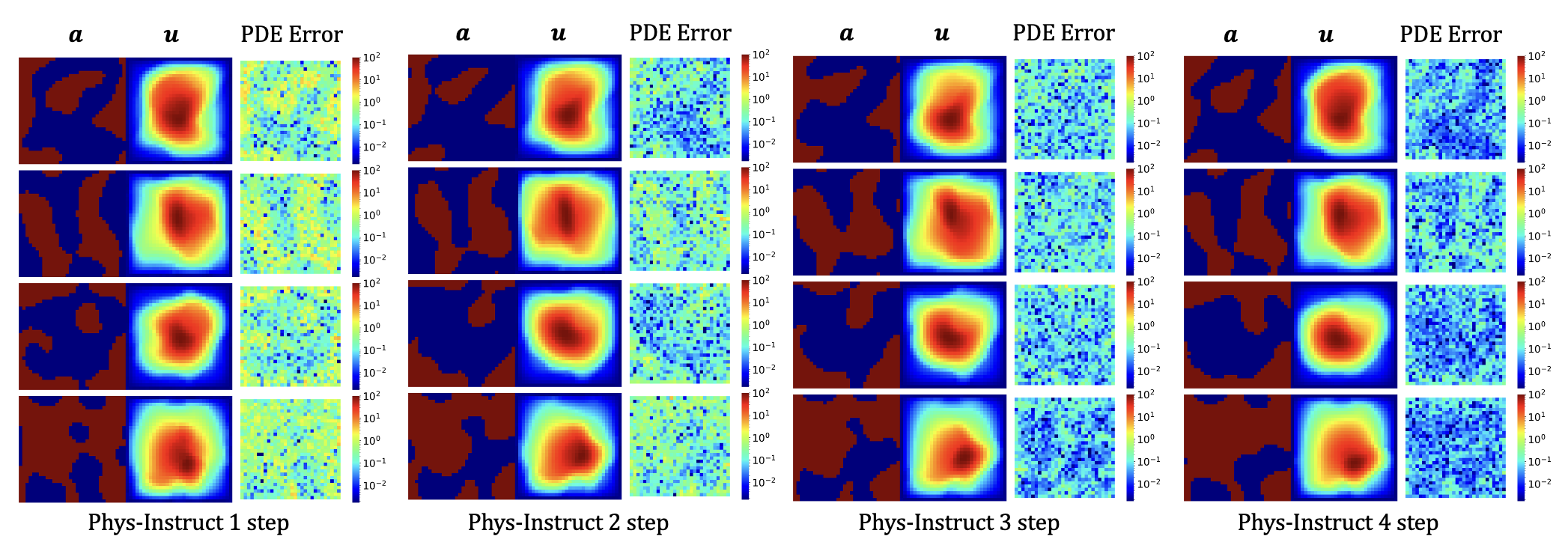}
    \caption{Phys-Instruct generated Darcy flow samples with 1-4 steps (heun sampler).The PDE error column is showing the absolute value of the PDE error field $r(\mathbf{x})_i$ for every point in the grid.}
    \label{fig:darcy_uncond}
\end{figure}

\subsection{Helmholtz forward problem}
Figure~\ref{fig:helm_fwd} shows qualitative comparison of Phys-Instruct with other baseline methods on Helmholtz equation forward problem. It shows that Phys-Instruct achieves low relative error and PDE error with details well captured.

\begin{figure}
    \centering
    \includegraphics[width=0.9\linewidth]{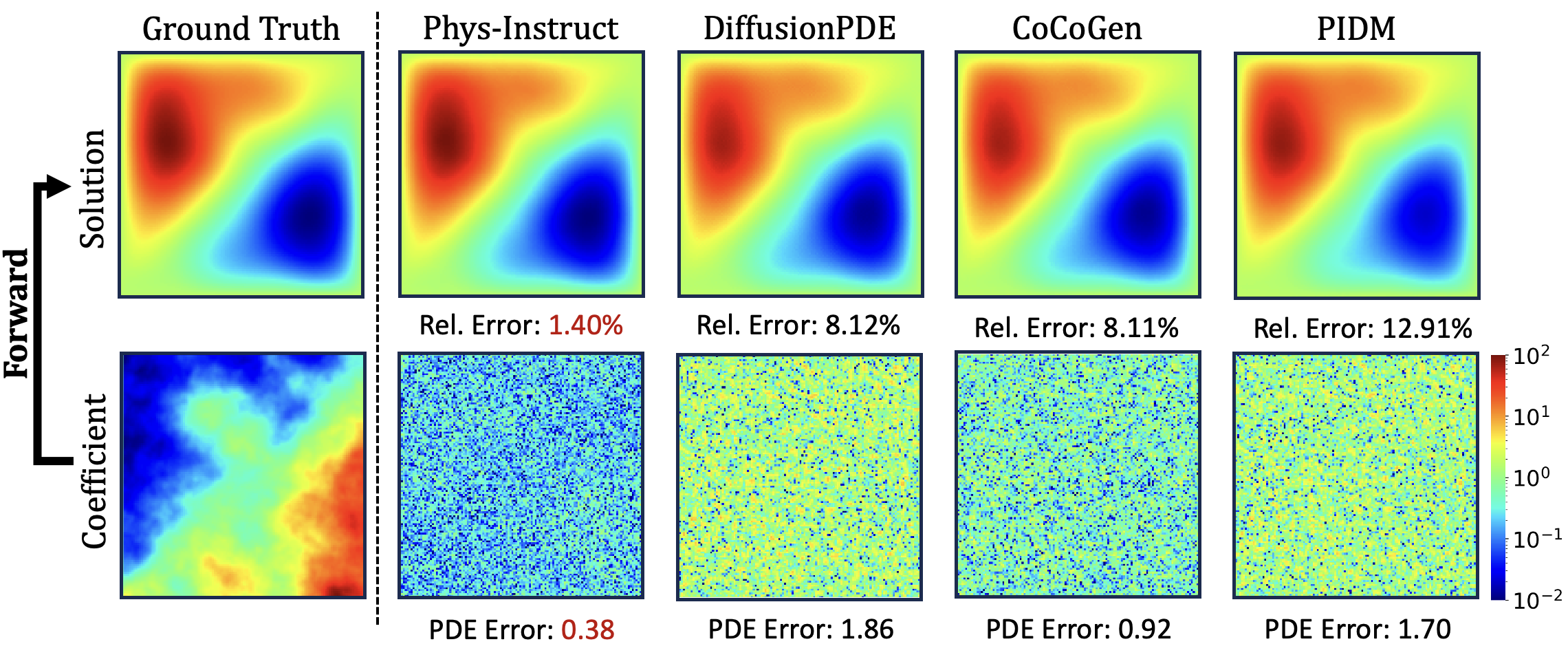}
\caption{
Qualitative comparison for the Helmholtz forward problem (corresponding to Table~\ref{tab:helmholtz_forward}).
The leftmost column shows the ground-truth fields, and the remaining columns show predictions from different methods.
The top row visualizes the predicted solution $u$, and the bottom row shows the pointwise absolute PDE error $\lvert r(\mathbf{x})_i\rvert$ over the grid (log-scaled colorbar).
We annotate each method with the solution relative error and the overall PDE Error$\downarrow$, highlighting that Phys-Instruct achieves the lowest PDE error and best agreement with the ground truth.}
    \label{fig:helm_fwd}
\end{figure}

\subsection{Additional forward problem}
We additionally conducted forward problem on Darcy flow and Poisson equation.We show the results of extra forward problems in Table~\ref{tab:forward}. One step Phys-Instruct achieves lowest relative error on the predicted solution, as well as the lowest PDE error.

\begin{table}[t]
\centering
\setlength{\tabcolsep}{3.5pt}
\renewcommand{\arraystretch}{0.95}
\caption{
Forward problems on Darcy flow and Poisson equation.
We report relative error and PDE Error ($\downarrow$).
Step denotes the number of sampling steps per sample.
}
\label{tab:forward}
\begin{tabular}{l c cc cc}
\toprule
\textbf{Forward} & & \multicolumn{2}{c}{Darcy Flow} & \multicolumn{2}{c}{Poisson} \\
\cmidrule(lr){3-4}\cmidrule(lr){5-6}
Method & Step & Rel.\ Error & PDE Error ($\times10^{-1}$) & Rel.\ Error & PDE Error ($\times10^{-1}$) \\
\midrule
PhysInstruct  & 1    & $\mathbf{0.29\%}$ & $\mathbf{2.82}$
                    & $\mathbf{2.49\%}$ & $\mathbf{0.99}$ \\
DiffusionPDE  & 2000 & $4.61\%$          & $5.26$
                    & $7.48\%$          & $2.24$ \\
CoCoGen       & 2000 & $4.39\%$          & $4.46$
                    & $7.25\%$          & $2.02$ \\
PIDM          & 2000 & $7.34\%$          & $3.44$
                    & $5.28\%$                              & $1.20$ \\
\bottomrule
\end{tabular}
\end{table}

\subsection{Additional reconstruction problem}
We follow ~\citet{huang2024diffusionpde} and consider a Burgers' equation reconstruction setting under sparse observations. Specifically, we randomly sample 5 out of 128 spatial locations as sensors and provide their measurements over time. Given these partial observations, the goal is to recover the full $128\times128$ spatiotemporal field. Results are summarized in Table~\ref{tab:burgers_partial}. We observe an accuracy--latency trade-off: one-step Phys-Instruct improves over baselines using 200 sampling steps in both prediction error and PDE error, while still trailing diffusion baselines with 2000 steps of roll-outs. Nevertheless, as shown in Figure~\ref{fig:burgers_partial}, the one-step reconstructions preserves the dominant spatiotemporal patterns of the ground truth, supporting the use of one-step sampling as a low-latency reconstruction baseline, which can serve as a strong initialization for subsequent refinement if desired.

\begin{figure}
    \centering
    \includegraphics[width=0.8\linewidth]{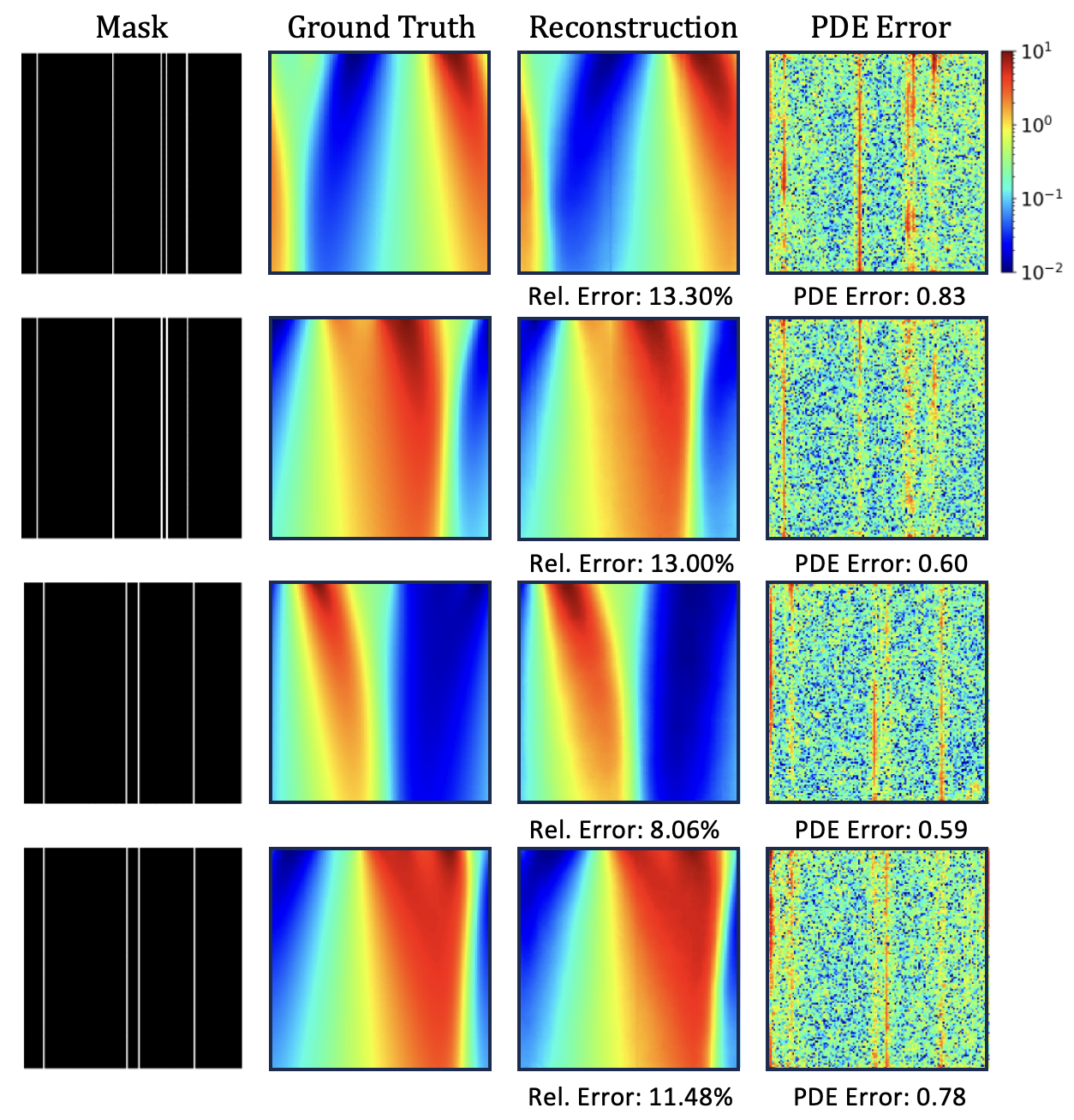}
    \caption{Qualitative result for the reconstruction problem of Burgers' equation.
The mask column shows where the 5 sensors are placed (3.9\% partial observation).
The PDE error column shows the pointwise absolute error $\lvert r(\mathbf{x})_i\rvert$ over the grid (log-scaled colorbar).}
    \label{fig:burgers_partial}
\end{figure}

\begin{table}[t]
\centering
\caption{
Burgers' reconstruction problem. 
We report relative errors and PDE Error ($\downarrow$). 
Step denotes the number of sampling steps per sample.}
\label{tab:burgers_partial}
\setlength{\tabcolsep}{3.5pt}
\renewcommand{\arraystretch}{0.95}
\begin{tabular}{lccc}
\toprule
& \multicolumn{1}{c}{}
& \multicolumn{2}{c}{Partial Observation} \\
\cmidrule(lr){3-4}
Method
& \multicolumn{1}{c}{Step}
& Rel.\ Error
& PDE Error ($\times 10^{-1}$)
 \\
\midrule
PhysInstruct & 1
& $12.00\%$
& $7.59$\\
\midrule
DiffusionPDE & 200
& $23.96\%$
& $9.24$
 \\
CoCoGen & 200
& $22.81\%$
& $8.00$
\\
PIDM & 200
& $25.80\%$
& $7.91$
 \\
\midrule
DiffusionPDE & 2000
& $4.66\%$
& $2.42$
 \\
CoCoGen & 2000
& $4.31\%$
& $1.78$
\\
PIDM & 2000
& $6.59\%$
& $1.62$
 \\
\bottomrule
\end{tabular}
\end{table}

\end{document}